\documentclass{article} 
\usepackage{iclr2027_conference,times}

\usepackage{hyperref}
\usepackage{url}

\title{Learning Unknown Constraints without Unsafe Data via Optimality and Counterfactual Regularization}

\author{%
  Zhouyu Zhang
  \\
  School of Electrical and Computer Engineering \\
  Georgia Institute of Technology \\
  Atlanta, GA 30309 \\
  \texttt{zzhang3097@gatech.edu} \\
  \And
  Chih-Yuan Chiu
  \\
  School of Electrical and Computer Engineering \\
  Georgia Institute of Technology \\
  Atlanta, GA 30309 \\
  \texttt{cyc@gatech.edu} \\
  \And
  Glen Chou \\
  School of Cybersecurity and Privacy \\
  School of Aerospace Engineering \\
  Georgia Institute of Technology \\
  \texttt{chou@gatech.edu}
}

\usepackage[utf8]{inputenc} 
\usepackage[T1]{fontenc} 
\usepackage{url} 
\usepackage{nicefrac}       
\usepackage{microtype}      

\usepackage{amsmath,amssymb,amsfonts}

\usepackage{float}

\usepackage{booktabs}
\usepackage{adjustbox}
\usepackage{wrapfig}
\usepackage{lipsum}
\usepackage{amsthm}
\usepackage{amssymb}
\usepackage{amsmath}
\usepackage{siunitx}
\usepackage{mathtools}
\usepackage{xcolor}
\usepackage{multirow}

\usepackage[inline]{enumitem}
\usepackage[ruled,vlined,linesnumbered,noend]{algorithm2e}
\usepackage[noend]{algpseudocode}
\usepackage{ifthen}
\usepackage{dsfont}
\usepackage{balance}
\usepackage{dirtytalk}
\usepackage{afterpage}
\usepackage{bm}
\usepackage{enumitem}

\usepackage{tabularx}

\usepackage{hyperref} 

\allowdisplaybreaks

\newcommand{\controlSpace}{\mathcal{A}}

\newcommand{\D}{\mathcal{D}}

\newcommand{\E}{\mathbb{E}}

\newcommand{\bigO}{\mathcal{O}}

\newcommand{\Prob}{\mathbb{P}}

\newcommand{\ra}{\rightarrow}

\newcommand{\R}{\mathbb{R}}

\newcommand{\stateSpace}{\mathcal{S}}

\newtheorem{remark}{Remark}

\newtheorem{proposition}{Proposition}

\newtheorem{lemma}{Lemma}

\newcommand{\KKT}{\text{KKT}}

\newcommand{\boldu}{\textbf{u}}
\newcommand{\boldx}{\textbf{x}}

\newcommand{\boldlambda}{\boldsymbol{\lambda}}
\newcommand{\boldnu}{\boldsymbol{\nu}}
\newcommand{\boldeta}{\boldsymbol{\eta}}
\newcommand{\boldrho}{\boldsymbol{\rho}}
\newcommand{\boldkappa}{\boldsymbol{\kappa}}

\newcommand{\bolds}{\textbf{s}}
\newcommand{\boldz}
{{\boldsymbol{z}}}
\newcommand{\boldzero}{\textbf{0}}

\newcommand{\loss}{\mathcal{L}}
\newcommand{\lagrangian}{L}

\newcommand{\invRel}{{\text{inv}, \text{rel}}}

\newcommand{\paren}[1]{{({#1})}}

\iclrfinalcopy
\begin{document}

\maketitle
\lhead{}

\begin{abstract}
\vspace{-3mm}
\looseness-1Learning from demonstrations (LfD) provides a framework for inferring unknown constraints from locally optimal, constraint-satisfying expert behavior. 
Existing approaches largely fall into two paradigms, constrained inverse optimal control (CIOC) and inverse constrained reinforcement learning (ICRL).
CIOC exploits optimality conditions such as the Karush--Kuhn--Tucker (KKT) conditions but typically assumes known dynamics and structured constraint representations.
Meanwhile, ICRL accommodates complex unknown constraints and unknown transition dynamics but often requires extensive online exploration, during which unsafe constraint violations may occur.
In this work, we introduce Counterfactual KKT (CF-KKT), a constraint learning framework that leverages learned dynamics and locally optimal demonstrations to recover unknown constraints without requiring known dynamics or additional risky exploration, thereby combining the data efficiency and safety advantages of CIOC with the flexibility of ICRL. First, we use a locally learned differentiable dynamics model to impose KKT-inspired optimality conditions directly on the demonstrations. Second, we use the learned dynamics to generate reward-improving counterfactual behaviors near the demonstrations, revealing behaviors that would be preferable in the absence of the unknown constraint and thus providing synthetic infeasible data. When the constraint parameterization is known, the same learned-dynamics framework enables direct CIOC-based parameter recovery, and we characterize its sensitivity to dynamics misspecification. Across high-dimensional robotic control tasks, our approach learns neural constraint representations with improved safety and data efficiency relative to state-of-the-art offline ICRL baselines.
\end{abstract}
\vspace{-22pt}
\section{Introduction}
\label{sec: Introduction}
\vspace{-8pt}


Autonomous systems deployed in safety-critical 
applications, such as autonomous driving and warehouse operations,
must adhere to constraints that encode safety and task specifications \citep{Altman1999ConstrainedMDP, Achiam2017ConstrainedPolicyOptimization, Borkar2005AnActorCriticAlgorithmforConstrainedMDPs, Mayne2000ConstrainedMPC, Berberich2021DataDrivenModelPredictiveControlWithStabilityandRobustnessGuarantees, Magni2004MPCOfContinuousTimeNonlinearSystemswithPiecewiseConstantControl}. 
Given the difficulty of manually engineering such constraints and the advent of data-driven methods,
learning from demonstration (LfD) has attracted increasing attention as a promising framework for recovering constraints from locally optimal and constraint-abiding expert trajectories \citep{ral20, ral22, quan2024learning, liu2023benchmarking}.
LfD-based constraint learning paradigms are often either based on inverse optimal control (IOC) \citep{toussaint, boyd, pontryagin, Rickenbach2024InverseOptimalControlasanErrorsinVariablesProblem, WulfmeierRWOP17, ral20, corl20, wafr} or inverse constrained reinforcement learning (ICRL) \citep{scobee2019maximum, mcpherson2021maximum, malik2021inverse, papadimitriou2022bayesian, liu2022distributed, liu2023meta, park2020inferring, jang2023inverse}. 
IOC-based methods recover constraints by enforcing consistency with the Karush–Kuhn–Tucker (KKT) conditions that characterize the local optimality of a given set of demonstrations \citep{Rickenbach2024InverseOptimalControlasanErrorsinVariablesProblem, menner, nguyen2026inversesafetyfilteringinferring, Zhang2026ConstraintLearningInMultiAgentDynamicGames, Chiu2026LearningConstraintsFromStochasticPartiallyObservedClosedLoopDemonstrations} under the dynamics models \citep{Rickenbach2024InverseOptimalControlasanErrorsinVariablesProblem, menner} and control methodology \citep{nguyen2026inversesafetyfilteringinferring, Zhang2026ConstraintLearningInMultiAgentDynamicGames, Chiu2026LearningConstraintsFromStochasticPartiallyObservedClosedLoopDemonstrations} from which the demonstrations were obtained.
However, such methods assume access to known, smooth system dynamics, which precludes their applicability to high-dimensional, black-box systems.
In contrast, ICRL-based constraint recovery \citep{scobee2019maximum, mcpherson2021maximum, malik2021inverse, papadimitriou2022bayesian, liu2022distributed, liu2023meta, park2020inferring, jang2023inverse} accommodates unknown dynamics and flexible neural constraint representations.
However, such methods usually require computation- \citep{mcpherson2021maximum, liu2023benchmarking} and data-heavy \citep{scobee2019maximum, liu2022distributed, quan2024learning} online exploration, may induce unsafe constraint violations during training \citep{malik2021inverse, liu2023benchmarking, quan2024learning}, and have few theoretical guarantees of constraint recovery accuracy \citep{malik2021inverse, park2020inferring, quan2024learning}.
While recent offline variants \citep{quan2024learning} avoid online exploration during training, they still rely on offline data generated by potentially constraint-violating policies to identify unsafe regions, which can be difficult or unsafe to collect in real-world systems.

In this work, we bridge CIOC and ICRL to enable constraint learning under unknown dynamics and constraint representations without requiring constraint-violating data. Our approach, \textit{Counterfactual KKT (CF-KKT)}, learns local dynamics from constraint-satisfying expert demonstrations and uses them to combine KKT-inspired optimality conditions with synthetic reward-improving counterfactual trajectories. These counterfactuals provide information about constraint-violating states without requiring constraint-violating trajectories to be executed in the environment. We further show that locally learned dynamics can support CIOC-based parameter recovery when the constraint parameterization is known.
Concretely, our contributions are:
\vspace{-5pt}
\begin{enumerate}[leftmargin=*, labelindent=0pt]
    \item For settings with closed-form \textit{a priori} known  constraint parameterizations, we present a two-stage framework which first learns system dynamics without additional demonstration data, then leverages the learned system dynamics to infer the unknown constraint via CIOC;
    %
    \item \looseness-1For settings with neural network (NN)-represented constraints, we introduce Counterfactual KKT (CF-KKT), which combines KKT-inspired optimality conditions with model-generated counterfactual trajectories to learn unknown constraints without observed constraint-violating data;
    \item We present a theoretical analysis of the sensitivity of the recovered constraints to system dynamics misspecification;
    %
    \item We evaluate CF-KKT in simulation across low- and high-dimensional systems with black-box dynamics, demonstrating improved constraint recovery and downstream safety over existing offline ICRL methods while structurally using 
    less data, all of which are safe.
\end{enumerate}

\vspace{-5pt}
\paragraph{Notation}
Given non-negative integers $n_1$ and $n_2$ with $n_1 < n_2$, we define $[n_2] := \{1, \cdots, n_2\}$ and $[n_1, n_2] := \{n_1, \cdots, n_2\}$. 
We use $C^k$ to denote $k$-times continuous differentiability.
Given a differentiable function $f: \R^n \ra \R^m$, we define the Jacobian matrix $\nabla_\boldx f \in \R^{m \times n}$ such that $[\nabla_\boldx f]_{ij} := \frac{\partial f_i}{\partial x_j}$ for each $i \in [m]$, $j \in [n]$.

\vspace{-6pt}
\section{Related Work}
\label{sec: Related Work}
\vspace{-6pt}


Our work relates most closely to constrained inverse optimal control (CIOC), which infers constraints from expert demonstrations using Karush–Kuhn–Tucker (KKT) optimality conditions \citep{toussaint, boyd, pontryagin, Rickenbach2024InverseOptimalControlasanErrorsinVariablesProblem, ral20, wafr, menner, nguyen2026inversesafetyfilteringinferring, Zhang2026ConstraintLearningInMultiAgentDynamicGames, corl, shah, Chiu2026LearningConstraintsFromStochasticPartiallyObservedClosedLoopDemonstrations}.
Existing CIOC-based constraint learning methods are typically highly data-efficient because they exploit the local optimality of demonstrations directly \citep{ral20}.
However, many such approaches assume \textit{known differentiable dynamics} and structured constraint parameterizations \citep{ral20, corl20}, which hampers their applicability to high-dimensional and neural network-represented systems.
While GP-based approaches relax parameterization assumptions \citep{ral22}, they scale poorly with dataset size and still require known dynamics.
In contrast, our work considers high-dimensional \textit{unknown dynamics} and \textit{unknown constraint representations} while preserving data efficiency, and enables further data-efficiency gains when the constraint parameterization is provided.

\looseness-1Partly in parallel with CIOC, ICRL-based methods \citep{liu2025comprehensive, liu2023benchmarking} were developed for inferring constraints from expert trajectories generated via constrained MDPs (CMDPs) \citep{malik2021inverse}. 
ICRL usually assumes access to a simulator rather than closed-form dynamics (in contrast to CIOC), and can be adapted to settings with unknown system dynamics or rewards \citep{liu2022distributed, liu2023meta, park2020inferring, jang2023inverse}. 
However, ICRL-based constraint recovery typically relies on iterative online environment exploration and constrained RL subroutines, leading to high computational cost, high sample complexity, and potentially unsafe exploration during training, which are unacceptable in time-sensitive and safety-critical settings.
Uncertainty-aware ICRL methods aim to reduce constraint violation risk \citep{xu2023uncertainty, subramanian2024confidence} but nonetheless still require online data. Offline ICRL \citep{liu2025comprehensive} avoids online exploration.
The primary prior offline method, ICSDICE \citep{quan2024learning}, constructs higher-reward constraint-violating distributions to identify unsafe regions (echoing earlier CIOC work by \cite{wafr}); however, doing so still requires a fixed dataset containing behaviors that are not guaranteed to satisfy the unknown constraint.

In contrast to the literature surveyed above, our work combines the safety and data efficiency advantage of CIOC with the representational flexibility of ICRL. Our approach leverages learned local dynamics to enable KKT-inspired inference under unknown dynamics, while synthetic reward-improving counterfactual data provide off-demonstration evidence of infeasibility without requiring observed constraint violations. In contrast, \cite{corl} uses hit-and-run sampling to generate counterfactuals but still relies heavily on known parametric constraints and explicit dynamics.

\vspace{-10pt}
\section{Preliminaries and Problem Formulation}
\label{sec: Preliminaries and Problem Formulation}
\vspace{-5pt}


We first review ICRL methods, with an emphasis on ICSDICE \citep{quan2024learning}, which serves as our primary offline ICRL baseline (Sec. \ref{subsec: Offline Inverse Constrained Reinforcement Learning (ICRL) via ICSDICE}). We then introduce the constrained inverse optimal control formulation and its associated KKT conditions (Sec. \ref{subsec: Constrained Inverse Optimal Control (CIOC)}), before formally stating the constraint-recovery problem considered in this work (Sec. \ref{subsec: Problem Statement}).





\vspace{-6pt}
\subsection{Inverse Constrained Reinforcement Learning (ICRL)}
\label{subsec: Offline Inverse Constrained Reinforcement Learning (ICRL) via ICSDICE}
\vspace{-6pt}
Consider 
a discrete-time, continuous-space constrained Markov decision process (CMDP) $(\mathcal{S},\mathcal{A},f,r,c,\bar\rho,\gamma)$ specified by a state space $\stateSpace \subseteq \R^n$, control (action) space $\controlSpace \subseteq \R^m$, system (transition) dynamics map $f: \stateSpace \times \controlSpace \ra \stateSpace$, reward map $r: \stateSpace \times \controlSpace \ra \R$, constraint map $c: \stateSpace \times \controlSpace \ra \R$, initial state distribution $\bar\rho$, and discount factor $\gamma \in [0, 1)$, 
with $f(\cdot, \cdot)$, $r(\cdot, \cdot)$, and $c(\cdot, \cdot)$ all in $C^1$.
Given a CMDP with \emph{known} constraint function $c(\cdot, \cdot)$, we consider the following horizon-$T$ \textit{forward constrained reinforcement learning (FCRL) problem}, which computes the optimal policy $\pi: \stateSpace \ra \controlSpace$ that maximizes, in expectation, the discounted cumulative reward $\sum_{t=0}^T \gamma^t r(x_t, u_t)$ subject to
the hard constraint $c(x_t, u_t) \leq 0$ over all times $t \in [0, T]$:
\begin{subequations}
\label{Eqn: Forward Constrained Reinforcement Learning (FCRL)}
\begin{align}
\label{Eqn: Forward Constrained Reinforcement Learning (FCRL), Objective}
  \pi^\star = \arg\max_{\pi: \stateSpace \ra \controlSpace} \quad
  &\mathbb{E}_{\bar\rho, \pi}
  \textstyle\left[\sum_{t=0}^T \gamma^t r\big(x_t, u_t \big)\right] \\
\label{Eqn: Forward Constrained Reinforcement Learning (FCRL), Constraint}
  \text{s.t.} \quad &\Prob_{\bar\rho, \pi}
  (c(x_t, u_t) \leq 0) = 1, \quad \forall t \in [0, T].
\end{align}
\end{subequations}
Above, the notations $\E_{\bar \rho, \pi}$ in \eqref{Eqn: Forward Constrained Reinforcement Learning (FCRL), Objective} and $\Prob_{\bar \rho, \pi}$ in \eqref{Eqn: Forward Constrained Reinforcement Learning (FCRL), Constraint} respectively denote expectations and probabilities taken over $x_0 \sim \bar \rho$ and $u_t \sim \pi$ for each $t \in [0, T]$, and, given  $\boldu := (u_0, \cdots, u_T)$ drawn component-wise from $\pi$, the states $\boldx := (x_0, \cdots, x_T)$ are defined to satisfy $x_{t+1} = f(x_t, u_t)$ for each $t \in [0, T-1]$.

In contrast,
the ICRL problem aims to recover \textit{a priori} unknown constraints $c(\cdot)$ from demonstrations of reward-maximizing and constraint-satisfying state-control trajectories.
Although current ICRL methods often assume that the demonstrations available for constraint learning are generated \textit{online} via continuous environment exploration (\say{interaction}), such exploratory actions are often costly or unsafe in safety-critical settings \citep{malik2021inverse, liu2023benchmarking, kim2023learning, adriana2024simplifyicrl}.
Offline ICRL methods include
ICSDICE \citep{quan2024learning}, which leverages two fixed demonstration datasets $\D_E$ and $\D_\pi$ for constraint inference, but does not allow environmental exploration during training.
Concretely, 
the \textit{expert dataset} $\D_E := \{(x_t^{\paren{d}}, u_t^{\paren{d}}, x_{t+1}^{\paren{d}}, r_t^{\paren{d}}): t \in [0, T-1], d \in [D_E]\}$ contains $D_E$ state-control trajectories generated using a constraint-satisfying optimal expert policy $\pi_E$, and a dataset $\D_{\pi} := \{(x_t^{\paren{d}}, u_t^{\paren{d}}, x_{t+1}^{\paren{d}}, r_t^{\paren{d}} ): t \in [0, T-1], d \in [D_{\pi}] \}$ contains $D_\pi$ state-control trajectories generated using possibly constraint-violating policies.

Although ICSDICE avoids online exploration during training, constructing $\D_\pi$ still requires behavior from policies that are not guaranteed to satisfy the unknown constraint.

 \vspace{-6pt}
\subsection{Constrained Inverse Optimal Control (CIOC)}
\label{subsec: Constrained Inverse Optimal Control (CIOC)}

Let $\stateSpace \subseteq \R^n$, $\controlSpace \subseteq \R^m$, $f: \stateSpace \times \controlSpace \ra \stateSpace$, $r: \stateSpace \times \controlSpace \ra \R$, and 
$c: \stateSpace \times \controlSpace \ra \R$
denote a state space, control (action) space, system dynamics map, reward map, and 
constraint map, respectively.
The horizon-$T$ \textit{constrained forward optimal control} (CFOC) problem, presented below, aims to compute optimal state-control trajectories $(\boldx, \boldu) := (x_0, \cdots, x_T, u_0, \cdots, u_T) \in \R^{(n+m)(T+1)}$ which minimize the cumulative cost $\sum_{t=0}^T - r(x_t, u_t)$ while satisfying the system dynamics $x_{t+1} = f(x_t, u_t)$ and constraints $c(x_t, u_t) \leq 0$ throughout $[0, T-1]$:
{
\setlength{\abovedisplayskip}{3pt}
\setlength{\belowdisplayskip}{3pt}
\begin{subequations}
\label{Eqn: Constrained Forward Optimal Control Problem}
\begin{align}
\label{Eqn: Constrained Forward Optimal Control Problem, Objective}
    \min_{\boldx, \boldu} \quad &\textstyle\sum_{t=0}^T -r(x_t, u_t) \\
\label{Eqn: Constrained Forward Optimal Control Problem, Constraints}
    \text{s.t.} \quad &x_{t+1} = f(x_t, u_t), \ \forall \ t \in [0, T-1], \quad c(x_t, u_t) \leq 0, \ \forall \ t \in [0, T].
\end{align}
\end{subequations}
}
$\hspace{-1mm}$Meanwhile, in the \textit{constrained inverse optimal control} (CIOC) problem, a constraint learner aims to recover \textit{a priori} unknown constraints $c(\cdot, \cdot)$
from a set of expert trajectories $\D_E := \{(x_t^\paren{d}, u_t^\paren{d}): t \in [0, T], d \in [D]\}$ which optimize \eqref{Eqn: Constrained Forward Optimal Control Problem}, assuming the reward map $r(\cdot, \cdot)$ is known. (The dynamics map $f(\cdot, \cdot)$ may either be \textit{a priori} known or unknown to the constraint learner.) The standard approach in CIOC for recovering the unknown constraint map $c(\cdot, \cdot)$ entails formulating the KKT equations corresponding to \eqref{Eqn: Constrained Forward Optimal Control Problem}, as presented below in \eqref{Eqn: KKT for Constrained Forward Optimal Control Problem}. 
Let $\boldlambda := (\lambda_0, \cdots, \lambda_T)$ and $\boldnu := (\nu_0, \cdots, \nu_{T-1})$ denote dual variables corresponding to the constraint and dynamics constraints in \eqref{Eqn: Constrained Forward Optimal Control Problem, Constraints}, respectively, across times $t$, and define the Lagrangian $\lagrangian: \stateSpace^{T+1} \times \controlSpace^{T+1} \times \R^{T+1} \times \R^{nT} \ra \R$ by:
\begin{align}
\label{Eqn: Lagrangian, Overall, CIOC}
    \lagrangian(\boldx, \boldu, \boldlambda, \boldnu) := \textstyle\sum_{t=0}^T -r(x_t, u_t) + \textstyle\sum_{t=0}^{T-1} \nu_t^\top (x_{t+1} - f(x_t, u_t)) + \textstyle\sum_{t=0}^T \lambda_t c(x_t, u_t).
\end{align}
The KKT equations for the CFOC problem \eqref{Eqn: Constrained Forward Optimal Control Problem} are thus:
{
\setlength{\abovedisplayskip}{3pt}
\setlength{\belowdisplayskip}{3pt}
\begin{subequations}
\label{Eqn: KKT for Constrained Forward Optimal Control Problem}
\begin{align}
\label{Eqn: KKT for Constrained Forward Optimal Control Problem, Complementarity}
    &c(x_t, u_t) \leq 0 \quad \perp \quad \lambda_t \geq 0, \qquad \forall t \in [0, T], \\
\label{Eqn: KKT for Constrained Forward Optimal Control Problem, Stationarity}
    &\nabla_{(\boldx, \boldu)} \lagrangian(\boldx, \boldu, \boldlambda, \boldnu) = 0.
\end{align}
\end{subequations}
}
$\hspace{-1mm}$Above, $\perp$ indicates complementarity of the left- and right-hand-side conditions. In \eqref{Eqn: KKT for Constrained Forward Optimal Control Problem, Complementarity}, $c(x_t,u_t)<0$ implies $\lambda_t=0$, while $\lambda_t>0$ implies $c(x_t,u_t)=0$; equivalently, $\lambda_t c(x_t,u_t)=0$.
In settings in which the constraint map $c(\cdot, \cdot)$ is \textit{a priori} known to be parameterized in a certain form,
the CIOC problem reduces to the recovery of parameter values for which the resulting constraint map $c(\cdot, \cdot)$ satisfies the KKT system \eqref{Eqn: KKT for Constrained Forward Optimal Control Problem}, when evaluated at expert state-control trajectories $(\boldx, \boldu)$ in $\D^E$ (see Sec. \ref{subsec: MIP Formulation for Parametric Constraints}).
Such parameter recovery problems can often be recast as mixed-integer programs (MIP) that, for certain constraint maps characterized by low-dimensional parameters, are efficient to solve exactly \citep{ral20}.

\paragraph{Comparing CIOC and ICRL Methods for Constraint Recovery}
CIOC exploits demonstration optimality directly, but typically requires known closed-form dynamics and structured constraint parameterizations. ICRL, in contrast, accommodates unknown dynamics and flexible constraint representations, but obtains off-expert coverage from additional behavior data beyond the expert demonstrations. This trade-off motivates the hybrid setting considered in this work.

\vspace{-5pt}
\subsection{Problem Statement}
\label{subsec: Problem Statement}
\vspace{-4pt}

We are given an expert dataset $\D_E$ of constraint-abiding optimal trajectories generated by solving a CFOC problem of the form \eqref{Eqn: Constrained Forward Optimal Control Problem}.
Moreover, suppose the reward map $r: \stateSpace \times \controlSpace \ra \R$ is known (we consider an unknown reward case in Appendix~\ref{subsec:exp-reward-model}) but the dynamics map $f: \stateSpace \times \controlSpace \ra \stateSpace$ and constraint map $c: \stateSpace \times \controlSpace \ra \R$ are unknown.
We aim to recover a neural network (NN) representation $\hat c(\cdot, \cdot; \theta): \stateSpace \times \controlSpace \ra \R$ for an \textit{a priori} unknown constraint map $c: \stateSpace \times \controlSpace \ra \R$, where $\theta \in \R^{M_\theta}$ describes NN weights to be trained using the dataset $\D_E$. 

\section{KKT-Informed Constraint Learning}
\label{sec: KKT-Informed Constraint Learning}
\vspace{-6pt}

We consider two constraint-recovery settings that share a learned local dynamics model. 
We first treat known parametric constraints with a CIOC-based mixed-integer program (MIP) in Sec. \ref{subsec: MIP Formulation for Parametric Constraints}, then remove the known parameterization and introduce \emph{Counterfactual KKT (CF-KKT)} for neural constraints in Sec. \ref{subsec: Neural KKT Constraint Learning}.

\vspace{-5pt}
\paragraph{Learned local dynamics.}
Using expert transitions $(x_t,u_t,x_{t+1})\in\D_E$, we fit
$\hat f(\cdot,\cdot;\psi):\stateSpace\times\controlSpace\ra\stateSpace$ by
$\loss_{\mathrm{dyn}}(\psi)=\E_{\D_E}[\|\hat f(x_t,u_t;\psi)-x_{t+1}\|_2^2]$, 
and use $\hat f(\cdot,\cdot;\psi)$ for both inference procedures below.
\vspace{-12pt}
\subsection{Learning Parametric Constraints via MIP}
\label{subsec: MIP Formulation for Parametric Constraints}

When the unknown constraint has a known closed-form parameterization $\hat c(\cdot,\cdot;\theta)$, we recover the unknown parameter $\theta$ by applying CIOC to the learned dynamics model. We define the full Lagrangian and its stationarity residual by:
{
\setlength{\abovedisplayskip}{3pt}
\setlength{\belowdisplayskip}{3pt}
\begin{align}
\label{Eqn: (Parameterized) Lagrangian, Overall, CIOC}
    \hspace{-8pt}\hat\lagrangian(\boldx, \boldu, \boldlambda, \boldnu; \theta, \psi) &:= \textstyle\sum_{t=0}^T -r(x_t, u_t) + \textstyle\sum_{t=0}^T \lambda_t \hat c(x_t, u_t; \theta) + \textstyle\sum_{t=0}^{T-1} \nu_t^\top (x_{t+1} - \hat f(x_t, u_t; \psi)),\hspace{-3pt} \\
    \hspace{-8pt}\hat\bolds(\boldx, \boldu, \boldlambda, \boldnu; \theta, \psi) &:= \nabla_{(\boldx, \boldu)} \hat\lagrangian(\boldx, \boldu, \boldlambda, \boldnu; \theta, \psi).\hspace{-3pt}
\end{align}
}
We seek constraint parameter values $\theta$ which are consistent with the KKT-optimality of the state-control trajectories $\{(\boldx^\paren{d}, \boldu^\paren{d}): d \in [D_E] \}$ in the expert demonstration dataset $\D^E$.
To this end, in \eqref{Eqn: Inverse KKT, Relaxed}, we pose a relaxed inverse KKT problem analogous to \citep[Prob. 2]{ral20}.
Concretely, we aim to recover parameters $\theta$ and dual variables $\boldlambda_{1:D} := (\boldlambda^\paren{d}: d \in [D_E])$ and $\boldnu_{1:D} := (\boldnu^\paren{d}: d \in [D_E])$ that minimize the sum, over all $d \in [D_E]$, of the magnitudes of stationarity residuals $\hat\bolds(\boldx^\paren{d}, \boldu^\paren{d}, \boldlambda^\paren{d}, \boldnu^\paren{d}; \theta, \psi)$, while adhering to the primal/dual feasibility and complementary slackness constraints in \eqref{Eqn: KKT for Constrained Forward Optimal Control Problem, Complementarity}:
{
\setlength{\abovedisplayskip}{3pt}
\setlength{\belowdisplayskip}{3pt}
\begin{subequations}
\label{Eqn: Inverse KKT, Relaxed}
\begin{align}
\label{Eqn: Inverse KKT, Relaxed, Objective}
    \min_{\theta, \boldlambda_{1:D_E}, \boldnu_{1:D_E}} 
    \quad &\textstyle\sum_{d=1}^{D_E} \big\Vert \hat \bolds(\boldx^\paren{d}, \boldu^\paren{d}, \boldlambda^\paren{d}, \boldnu^\paren{d}; \theta, \psi) \big\Vert_2^2 \\
\label{Eqn: Inverse KKT, Relaxed, Constraints}
    \text{s.t.} \hspace{1.2cm} &\hat c(x_t^\paren{d}, u_t^\paren{d}; \theta) \leq 0 \quad \perp \quad \lambda_t^\paren{d} \geq 0, \qquad \forall t \in [0, T], d \in [D_E].
\end{align}
\end{subequations}
}
$\hspace{-1mm}$In the CIOC-based constraint recovery literature \citep{ral20, Zhang2026ConstraintLearningInMultiAgentDynamicGames}, \eqref{Eqn: Inverse KKT, Relaxed} is often solved using off-the-shelf solvers such as \cite{gurobi}, with the constraints \eqref{Eqn: Inverse KKT, Relaxed, Constraints} encoded via big-M reformulations \citep{Bertsimas1998IntroductiontoLinearOptimization}. We provide an analysis of MIP sensitivity to dynamics misspecification in App. \ref{app:proofs}.

\vspace{-6pt}
\subsection{Counterfactual KKT (CF-KKT) Constraint Learning}
\label{subsec: Neural KKT Constraint Learning}
\vspace{-6pt}
When the constraint form is unknown, we represent it by a neural network $\hat c(\cdot,\cdot;\theta)$ and combine expert-side KKT supervision with synthetic counterfactual supervision.


\paragraph{Transition-wise auxiliary multipliers.}
The exact KKT stationarity condition in Sec.~\ref{subsec: Constrained Inverse Optimal Control (CIOC)} is defined over an entire trajectory and couples neighboring time steps through the dynamics multipliers $\boldnu=(\nu_0,\ldots,\nu_{T-1})$. 
Directly imposing this trajectory-level system is inconvenient for stochastic mini-batch training, where transition tuples $(x_t^\paren{d},u_t^\paren{d},x_{t+1}^\paren{d},r_t^\paren{d})$ are sampled across time indices and demonstrations.
We therefore construct a transition-wise KKT-informed objective using bounded auxiliary multipliers for the local inequality and dynamics terms, as defined below:
{
\setlength{\abovedisplayskip}{3pt}
\setlength{\belowdisplayskip}{3pt}
\begin{align}
\label{Eqn: Auxiliary Multipliers}
    \tilde \lambda_t(x_t,u_t;\theta)
    &=
    \lambda_{\max}
    \sigma\!\bigl(\hat c(x_t,u_t;\theta)\bigr),
    \qquad
    \tilde \nu_t
    =
    \nu_{\max}\tanh(\eta_t),
\end{align}
}
$\hspace{-1mm}$where $\lambda_{\max},\nu_{\max}>0$ are fixed bounds, $\eta_t\in\R^n$ is per-timestep, and $\tanh(\cdot)$ is applied elementwise. 
Above, \eqref{Eqn: Auxiliary Multipliers} 
ensures $\tilde \lambda_t \geq 0$, consistent with dual feasibility, and calibrates $\Vert \tilde \lambda_t \Vert_2$ such that $\tilde\lambda_t$ becomes small for states predicted well inside the feasible region and larger as $\hat c(x_t,u_t;\theta)$ approaches or exceeds zero.
The equality multiplier $\tilde\nu_t\in[-\nu_{\max},\nu_{\max}]^n$ can take either sign within a bounded range.
Bounding both multipliers improves numerical stability during training.

\paragraph{KKT-informed loss.}
For a single transition, define the local Lagrangian:
{
\setlength{\abovedisplayskip}{3pt}
\setlength{\belowdisplayskip}{3pt}
\begin{align}
\label{Eqn: Lagrangian, Per-step}
    \hat \lagrangian_t
    (x_t,u_t,x_{t+1},\lambda,\nu;\theta,\psi)
    &:=
    -r(x_t,u_t)
    +\lambda_{t}\hat c(x_t,u_t;\theta)
    +\nu^\top_t
    \bigl[
        \hat f(x_t,u_t;\psi)-x_{t+1}
    \bigr].
\end{align}
}
Given weights $w_p,w_c,w_a>0$, we define
{
\setlength{\abovedisplayskip}{3pt}
\setlength{\belowdisplayskip}{3pt}
\begin{align}
\label{Eqn: Loss, KKT}
\loss_\KKT(\theta)
=
\E_{\D_E}\big[
&
\big\|
\big.
\nabla_{x_t,u_t}
\hat\lagrangian_t
(x_t,u_t,x_{t+1},\lambda,\nu;\theta,\psi)
\big|_{\lambda=\tilde\lambda_t,
\nu=\tilde\nu_t}
\big\|_2^2
\nonumber\\
&+
w_p
\max\!\big\{
\hat c(x_t,u_t;\theta),0
\big\}^2
+
w_c
\big(
\tilde\lambda_t
\hat c(x_t,u_t;\theta)
\big)^2
+
w_a
\big|
\hat c(x_t,u_t;\theta)
\big|
\big].
\end{align}
}
$\hspace{-1mm}$The first three terms respectively penalize a transition-wise stationarity residual, primal constraint violation, and violation of complementary slackness. 
The final activity regularizer discourages the learned constraint from assigning large negative values to all expert transitions.

\begin{remark}
Transition tuples of the form
$(x_t^\paren{d}, u_t^\paren{d}, x_{t+1}^\paren{d})$
can be shuffled across time indices and trajectories to support
stochastic mini-batch training. The full trajectory KKT conditions,
however, include cross-time coupling through the dynamics
multipliers $\nu_t$. In our implementation, training with the full trajectory KKT residual produces Inf/NaN losses on \texttt{BlockedAnt} and \texttt{HalfCheetah}, while collapsing the cumulative reward on the other environments, which we attribute to the numerical instability of cross-time multiplier $\nu_t$ recursion.
Motivated by this observation and the computational convenience
of transition-wise training, we instead incorporate the per-time
Lagrangian \eqref{Eqn: Lagrangian, Per-step} into
$\loss_\KKT(\theta)$.
\end{remark}

\vspace{-10pt}

%

\paragraph{Synthetic Counterfactual Trajectories}
The KKT-informed loss in \eqref{Eqn: Loss, KKT} is evaluated only on expert
demonstrations, which are constraint-satisfying by construction. Consequently,
it provides limited direct supervision information about infeasible behaviors away from the
expert trajectories. To address this issue without collecting unsafe data,
CF-KKT uses the learned dynamics model to generate synthetic counterfactual
trajectories near the demonstrations.

We sample multiple windows of $H$ consecutive expert transitions from demonstrations. We denote the window from
trajectory $d$, starting at time index $i$ as
$\{(x_{i+h}^{\paren{d}},u_{i+h}^{\paren{d}})\}_{h=0}^{H-1}$.
For each window, we generate $K$ perturbed control sequences, indexed by
$k\in[K]$, by independently perturbing the corresponding expert control at
each rollout step, while holding $d$ fixed:
{
\setlength{\abovedisplayskip}{3pt}
\setlength{\belowdisplayskip}{3pt}
\begin{align}
\label{Eqn: Counterfactual Action Perturbation}
    u_h^\paren{k}
    =
    \mathrm{clip}
    \big(
    u_{i+h}^\paren{d}
    +
    \sigma\,\epsilon_h^\paren{d,k},
    -1,1
    \big),
    \qquad
    \epsilon_h^\paren{d,k}
    \sim \mathcal N(0,I_m).
\end{align}
}
\hspace{-1mm}
The perturbed and baseline rollouts start from the same demonstrated state,
$x_0^\paren{k}=\tilde x_0=x_i^\paren{d}$, and use the same learned dynamics.
The counterfactual rollout follows
$x_{h+1}^\paren{k}
=\hat f(x_h^\paren{k},u_h^\paren{k};\psi)$
for $h\in[0,H-1]$, while the baseline follows the corresponding expert action
sequence, with states rolled out through the learned dynamics:
$\tilde u_h=u_{i+h}^\paren{d}$ and
$\tilde x_{h+1}=\hat f(\tilde x_h,\tilde u_h;\psi)$.

A truncated $H$-step reward alone does not account for the cost-to-go of the
terminal state $x_H$, since a perturbation may obtain larger short-horizon reward
while steering the system towards a state with poor future cumulative reward. We therefore
augment the rollout reward with a value function estimate $\hat V(\cdot)$,
whose estimation from expert return-to-go is described in
App. \ref{sec: Experiment Details}:
{
\setlength{\abovedisplayskip}{3pt}
\setlength{\belowdisplayskip}{3pt}
\begin{align}
\label{Eqn: Counterfactual Return}
    \hat J_V^\paren{k}
    &=
    \textstyle\sum_{h=0}^{H-1}
    \gamma^h
    r
    \big(
    x_h^\paren{k},
    u_h^\paren{k}
    \big)
    +
    \gamma^H
    \hat V
    \big(
    x_H^\paren{k}
    \big),
    \\
    \hat J_{V,\mathrm{base}}
    &=
    \textstyle\sum_{h=0}^{H-1}
    \gamma^h
    r
    \left(
    \tilde x_h,
    \tilde u_h
    \right)
    +
    \gamma^H
    \hat V
    \left(
    \tilde x_H
    \right).
\end{align}
}
$\hspace{-1mm}$We then define the value-augmented counterfactual advantage as
$\hat A_V^\paren{k}:=\hat J_V^\paren{k}-\hat J_{V,\mathrm{base}}$.
The terminal-value term $\hat V(\cdot)$ accounts for consequences that extend
beyond the finite $H$-step rollout. Comparing the perturbed rollout against the
expert-action trajectory from the same initial state then isolates the effect of
the action perturbation, rather than differences due to the starting state.

We retain counterfactuals whose value-augmented return exceeds that of the
expert-action baseline, i.e., $w_k=\mathbf{1}\!\big[\hat A_V^\paren{k}>0\big]$.
Because both $\hat f$ and $\hat V$ are learned, $w_k$ is a practical approximation to an ideal selection rule. To state the approximate rule, let $V_{i+H}^\star(x)$ denote the optimal feasible discounted cost-to-go from state $x$ over the remainder of the expert horizon under the true dynamics,
with $V_{i+H}^\star(x)=-\infty$ if no feasible continuation exists. Let
$J_{V^\star}^\paren{k}$ and $J_{V^\star,\mathrm{base}}$ denote the two rewards
in \eqref{Eqn: Counterfactual Return} using true-dynamics rollouts and replacing
$\hat V$ by $V_{i+H}^\star$.

\begin{lemma}[Positive Counterfactual Advantage Implies Infeasibility]
\label{lem:counterfactual-infeasibility}
Let $(x^E_{0:T},u^E_{0:T})$ be a globally optimal feasible expert trajectory,
and consider an $H$-step window beginning at expert time $i$. Denote the
corresponding expert controls by
$u^E_{i:i+H-1}:=(u_i^E,\ldots,u_{i+H-1}^E)$.
For any alternative control sequence
$u_{0:H-1}:=(u_0,\ldots,u_{H-1})$ initialized at $x_0=x_i^E$, define
\begin{align}
J_{i,H}^V(u_{0:H-1})
=
\textstyle\sum_{h=0}^{H-1}\gamma^h r(x_h,u_h)
+
\gamma^H V_{i+H}^\star(x_H),
\end{align}
where $x_{h+1}=f(x_h,u_h)$ follows the true dynamics and
$V_{i+H}^\star(x_H)$ is the optimal constrained value achievable from $x_H$
over the remaining time steps $i+H,\ldots,T$. If no feasible trajectory exists
from $x_H$ over these remaining steps, we define
$V_{i+H}^\star(x_H)=-\infty$.
Then
{
\setlength{\abovedisplayskip}{3pt}
\setlength{\belowdisplayskip}{3pt}
\begin{align}
J_{i,H}^V(u_{0:H-1})
>
J_{i,H}^V(u^E_{i:i+H-1})
\quad\Longrightarrow\quad
\exists\,h\in\{0,\ldots,H-1\}:\;
c(x_h,u_h)>0.
\end{align}
}
\end{lemma}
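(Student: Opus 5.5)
We argue by contraposition: assume every step of the window is feasible, i.e., $c(x_h,u_h)\le 0$ for all $h\in\{0,\ldots,H-1\}$, and show that $J_{i,H}^V(u_{0:H-1})\le J_{i,H}^V(u^E_{i:i+H-1})$. The underlying idea is dynamic programming. Under the true dynamics, the expert's tail from time $i$ is itself optimal (Bellman's principle). Any feasible deviation confined to the window can be spliced with an optimal feasible continuation. The result is a feasible full trajectory, and its return cannot beat the expert's.

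We first dispose of a degenerate case. If $V_{i+H}^\star(x_H)=-\infty$, then the left-hand side equals $-\infty$ and the implication holds vacuously. So we assume $V_{i+H}^\star(x_H)>-\infty$, and we take a feasible continuation from $x_H$ that attains (or, if the supremum is not attained, comes within $\varepsilon$ of) $V_{i+H}^\star(x_H)$.

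Next we build the spliced trajectory. It consists of the expert prefix $(x^E_{0:i},u^E_{0:i-1})$, followed by the window $(x_{0:H-1},u_{0:H-1})$ with $x_0=x_i^E$, followed by the optimal continuation from $x_H$. The pieces join consistently under the true dynamics $f$: the window starts at $x_i^E$, and the continuation starts at $x_H=f(x_{H-1},u_{H-1})$. Every step of the spliced trajectory is feasible. The prefix is feasible because the expert is feasible, the window is feasible by the contrapositive hypothesis, and the continuation is feasible by construction. Global optimality of the expert therefore gives
\[
R_{\mathrm{pre}}+\gamma^{i}\,J_{i,H}^V(u_{0:H-1})\;\le\;R_{\mathrm{pre}}+\gamma^{i}\Big(\textstyle\sum_{h=0}^{H-1}\gamma^h r(x^E_{i+h},u^E_{i+h})+\gamma^H R^E_{\mathrm{tail}}\Big),
\]
where $R_{\mathrm{pre}}$ is the shared prefix reward and $R^E_{\mathrm{tail}}$ is the expert's discounted reward from time $i+H$. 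Taking $\varepsilon\to 0$ removes any slack from non-attainment. Cancelling the common prefix and dividing by $\gamma^i>0$ (if $\gamma=0$, the statement reduces to a per-step comparison and the same argument applies) leaves the window return on the left. On the right we still have $R^E_{\mathrm{tail}}$ in place of $V_{i+H}^\star(x^E_{i+H})$.

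The main subtlety is to show that $R^E_{\mathrm{tail}}=V_{i+H}^\star(x^E_{i+H})$, i.e., that the right-hand side equals $J_{i,H}^V(u^E_{i:i+H-1})$. The inequality $R^E_{\mathrm{tail}}\le V_{i+H}^\star(x^E_{i+H})$ holds because the expert tail is one feasible continuation. Strict inequality is impossible: otherwise splicing a better feasible tail onto the expert prefix would produce a feasible trajectory with strictly higher return, contradicting global optimality (the principle of optimality). We must be careful that the time indexing of $V^\star_{i+H}$ matches, with discounting measured relative to time $i+H$ and the remaining horizon running over $i+H,\ldots,T$, and that the reward at the terminal step $T$ is counted consistently in both the window and the tail. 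With this identification the contrapositive holds. It then follows that $J_{i,H}^V(u_{0:H-1})>J_{i,H}^V(u^E_{i:i+H-1})$ forces $c(x_h,u_h)>0$ for some $h\in\{0,\ldots,H-1\}$.
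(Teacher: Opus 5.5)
Your proof is correct and follows the same route as the paper's proof. You assume every window step is feasible, dispose of the $V_{i+H}^\star(x_H)=-\infty$ case, splice the window with an optimal feasible continuation from $x_H$, and invoke the expert's global optimality. Beyond the paper, you make explicit three steps it compresses into the one assertion that the expert's controls from time $i$ onward are optimal from $x_i^E$: the prefix splicing, the identification of the expert's tail return with $V_{i+H}^\star(x^E_{i+H})$, and the non-attainment ($\varepsilon$) issue.

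The one slip is your parenthetical claim that $\gamma=0$ goes through by the same argument. For $i\ge 1$ the objective then ignores everything after time $0$, so the expert's tail need not be optimal and the implication can fail. The paper's proof silently relies on $\gamma^i>0$ in the same way, so this degenerate case is effectively excluded there too.
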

\vspace{-12pt}

\begin{proof}
Let $J_{E,i}$ denote the expert return from time $i$ to $T$. Because the
expert trajectory is globally optimal, the expert controls from time $i$
onward are optimal when initialized at $x_i^E$. Hence,
\vspace{-6pt}
\[
J_{i,H}^V(u^E_{i:i+H-1})=J_{E,i}.
\]
\vspace{-20pt}

Now suppose that
$c(x_h,u_h)\leq0$ for every $h\in\{0,\ldots,H-1\}$.
If no feasible trajectory exists from $x_H$ over the remaining time steps,
then $J_{i,H}^V(u_{0:H-1})=-\infty$, so the strict inequality cannot hold.
Otherwise, following $u_{0:H-1}$ for $H$ steps and then taking the optimal
feasible controls from $x_H$ defines a feasible trajectory starting from
$x_i^E$. By global optimality of the expert,
\vspace{-6pt}
\[
J_{i,H}^V(u_{0:H-1})
\leq
J_{E,i}
=
J_{i,H}^V(u^E_{i:i+H-1}),
\]
which contradicts the assumed strict inequality.
\end{proof}
\vspace{-6pt}
The assumptions of Lemma~\ref{lem:counterfactual-infeasibility} are sufficient
conditions rather than requirements we expect to hold exactly in practice.
Accordingly, CF-KKT uses positive value-augmented advantage as a soft selection
criterion, whose empirical validity is supported by the
results in Appendix~\ref{app:cf-premise}.



\vspace{-6pt}
\paragraph{Counterfactual Constraint Loss}
Let
$\mathcal B^\paren{k}
=
\{(x_h^\paren{k},u_h^\paren{k})\}_{h=0}^{H-1}$
denote the $k$-th counterfactual set. For the selected sets, we train the average learned constraint value along the rollout to exceed a positive margin $m$:
{
\setlength{\abovedisplayskip}{3pt}
\setlength{\belowdisplayskip}{3pt}
\begin{align}
\label{Eqn: Loss, Counterfactual}
    \loss_{\mathrm{CF}}(\theta)
    =
    \frac{
    \sum_{k=1}^{K}
    w_k
    \cdot
    \max
    \big\{
    m
    -
\frac{1}{H}\sum_{h=0}^{H-1}
    \hat c
    \big(
    x_h^\paren{k},
    u_h^\paren{k};
    \theta
    \big),
    0
    \big\}^2
    }{
    \sum_{k=1}^{K} w_k
    }.
\end{align}
}
$\hspace{-1mm}$The counterfactual trajectories are treated as fixed synthetic samples during the constraint update, so gradients of \eqref{Eqn: Loss, Counterfactual} propagate only through $\hat c(\cdot,\cdot;\theta)$. By Lemma~\ref{lem:counterfactual-infeasibility}, a counterfactual trajectory which generates a higher reward than a constrained optimal trajectory must violate constraint feasibility at one or more timesteps. 
We nevertheless use the mean constraint value along the rollout to provide denser information. 
This is motivated by the empirical observation that the rollout may violate feasibility at multiple timesteps. 
In particular, App. \ref{app:cf-premise} shows high state-level precision among selected counterfactuals, indicating that a substantial fraction of states along a positive-advantage rollout are indeed infeasible. 
Averaging therefore provides a denser and smoother loss gradient than a max-based or single-step objective that uses fewer steps to train.
Finally, CF-KKT combines the original KKT-informed loss with the synthetic counterfactual loss:
{
\setlength{\abovedisplayskip}{3pt}
\setlength{\belowdisplayskip}{3pt}
\begin{align}
\label{Eqn: Loss, CF-KKT}
    \loss_{\mathrm{CF\text{-}KKT}}(\theta)
    =
    \alpha_{\KKT}
    \loss_\KKT(\theta)
    +
    \alpha_{\mathrm{CF}}
    \loss_{\mathrm{CF}}(\theta).
\end{align}
}


\vspace{-10pt}
\section{Experiments}
\label{sec: Experiments}
\vspace{-8pt}

We evaluate three increasingly unstructured settings: known parametric constraints under unknown dynamics, neural constraint recovery in continuous mazes, and high-dimensional MuJoCo tasks with unknown dynamics and constraint geometry. Our methods infer constraints from expert demonstrations, using learned dynamics to provide synthetic off-demonstration supervision when needed.
\vspace{-20pt}
\paragraph{Baselines.}
We compare against \textbf{ICSDICE} \citep{quan2024learning},
\textbf{SMODICE} \citep{ma2022versatile}, and
\textbf{IQ-Learn} \citep{garg2021iq}.
ICSDICE is our primary offline ICRL baseline and explicitly learns a constraint
from expert and suboptimal offline data. SMODICE is an offline imitation method
based on distribution matching and does not natively recover a constraint mask $\mathbf{1}[c(x)\leq0]$; for constraint-recovery and downstream-planning comparisons, we modify
SMODICE to convert its learned state score into a feasibility mask.
IQ-Learn likewise does not recover an explicit constraint; we therefore keep
its standard imitation-learning formulation and evaluate the learned policy
directly. In the known-parametric setting, we additionally compare against a
parametric variant of ICSDICE using the same constraint family as the MIP method.
\vspace{-6pt}
\subsection{Known Parametric Constraints under Unknown Dynamics}
\label{subsec:exp-mip}
\vspace{-5pt}

We first consider the structured setting in which the constraint family is known up to a scalar parameter $\theta$ but the dynamics are unknown. We evaluate \texttt{HalfCheetah}, \texttt{BlockedAnt}, \texttt{Swimmer}, and \texttt{Quadcopter}, using one expert trajectory to fit $\hat f$ and recover $\theta$ via the MIP in Sec.~\ref{subsec: MIP Formulation for Parametric Constraints}. Environment, constraint, and demonstration details are given in App. \ref{app:parametric-envs}.
\vspace{-10pt}
\paragraph{Comparison against Parametric ICSDICE.}
For \texttt{HalfCheetah} and \texttt{BlockedAnt}, we also train ICSDICE with the same scalar constraint family $c_\theta(x)=\theta-x_{\mathrm{pos}}$, with $\mathrm{pos}$ being position on the x-axis. MIP method uses the expert trajectory, whereas ICSDICE additionally uses suboptimal trajectories from~\citep{liu2023benchmarking}.
 
Table~\ref{tab:mip_results} shows that the MIP closely recovers the true value of $\theta$ on \texttt{Swimmer}, \texttt{Quadcopter}, and \texttt{HalfCheetah}.
The larger \texttt{BlockedAnt} error ($-2.05$ versus $-3.0$) is consistent with
larger KKT residuals in its demonstrations. Parametric ICSDICE yields $-1.23$
on \texttt{HalfCheetah} and $-3.99$ on \texttt{BlockedAnt}, showing that the
same parametric family can produce substantially different estimates under an
occupancy-based objective.
\vspace{-6pt}

\begin{wraptable}{r}{0.50\columnwidth}
\vspace{-10pt}
\centering
\scriptsize
\setlength{\tabcolsep}{2.5pt}
\renewcommand{\arraystretch}{0.9}
\resizebox{\linewidth}{!}{%
\begin{tabular}{lcccc}
\toprule
Env. & $\theta^*$ & MIP $\hat\theta$ & ICSDICE $\hat\theta$ & MIP error \\
\midrule
Swimmer     & $0.04$ & $0.0484$ & --      & $+0.0084$ \\
Quadcopter  & $64$   & $64$     & --      & $0$ \\
HalfCheetah & $-3.0$ & $-2.98$  & $-1.23$ & $+0.02$ \\
BlockedAnt  & $-3.0$ & $-2.05$  & $-3.99$ & $+0.95$ \\
\bottomrule
\end{tabular}%
}
\vspace{-10pt}
\caption{Parametric recovery performance}
\label{tab:mip_results}
\vspace{-15pt}
\end{wraptable}

\subsection{Unknown Constraint Geometry in Continuous Mazes}
\label{subsec:exp-maze}
\vspace{-6pt}
We next remove the assumption of known constraint parameterization while keeping the state
dimension small enough to limit constraint complexity. These
continuous maze environments let us test whether CF-KKT can recover qualitatively
different constraint geometries, rather than only a scalar parameter within a
known family.
\vspace{-6pt}
\paragraph{Environment and Demonstrations.}
We use four 2D navigation tasks (\texttt{Obstacle}, \texttt{Wide},
\texttt{Disk}, \texttt{Zigzag}) with double-integrator dynamics and 
unknown obstacle geometries. CF-KKT receives feasible expert demonstrations;
baselines additionally receive a suboptimal dataset that may contain
violations. For more dynamics, reward, geometry, and dataset-generation details, see App. \ref{app:maze-envs}.
\vspace{-12pt}
\paragraph{Constraint IoU.}
Let $\mathcal U^\star$ be the ground-truth unsafe region and
$\hat{\mathcal U}=\{x:\hat c(x;\theta)>\tau\}$ the predicted unsafe set. We
report $\mathrm{IoU}=|\hat{\mathcal U}\cap\mathcal U^\star|/
|\hat{\mathcal U}\cup\mathcal U^\star|$ on a uniform $161\times161$ position
grid with zero velocity. To account for method-dependent score scales, we use
the common percentile threshold
$\tau=Q_{0.85}(\{\hat c(x;\theta):x\in\mathrm{grid}\})$. Table~\ref{tab:iou} and Fig.~\ref{fig:cost_map} show that CF-KKT recovers the
constraint geometry more accurately on every environment. Its mean IoU is
$0.707$, compared with $0.425$ for ICSDICE and $0.379$ for SMODICE. The
largest gains over ICSDICE occur on \texttt{Disk}
($0.815$ versus $0.428$) and \texttt{Obstacle}
($0.590$ versus $0.276$).

\begin{wraptable}{r}{0.53\columnwidth}
\vspace{-10pt}
\centering
\scriptsize
\setlength{\tabcolsep}{2.2pt}
\renewcommand{\arraystretch}{0.9}
\resizebox{\linewidth}{!}{%
\begin{tabular}{lcccc}
\toprule
Env. & SMODICE & ICSDICE & CF-KKT & $\Delta$ \\
\midrule
Obstacle & $0.275\!\pm\!0.405$ & $0.276\!\pm\!0.193$ & $\mathbf{0.590\!\pm\!0.153}$ & $+0.314$ \\
Wide     & $0.547\!\pm\!0.113$ & $0.600\!\pm\!0.140$ & $\mathbf{0.724\!\pm\!0.083}$ & $+0.124$ \\
Disk     & $0.363\!\pm\!0.133$ & $0.428\!\pm\!0.218$ & $\mathbf{0.815\!\pm\!0.057}$ & $+0.387$ \\
Zigzag   & $0.330\!\pm\!0.054$ & $0.394\!\pm\!0.172$ & $\mathbf{0.698\!\pm\!0.079}$ & $+0.304$ \\
\midrule
Mean     & $0.379$ & $0.425$ & $\mathbf{0.707}$ & $+0.282$ \\
\bottomrule
\end{tabular}%
}
\vspace{-10pt}
\caption{Maze constraint IoU (mean $\pm$ std over seeds); $\Delta$ is relative to ICSDICE.}
\label{tab:iou}
\vspace{-15pt}
\end{wraptable}

\begin{figure}[t]
    \centering
    \includegraphics[width=0.9\linewidth]{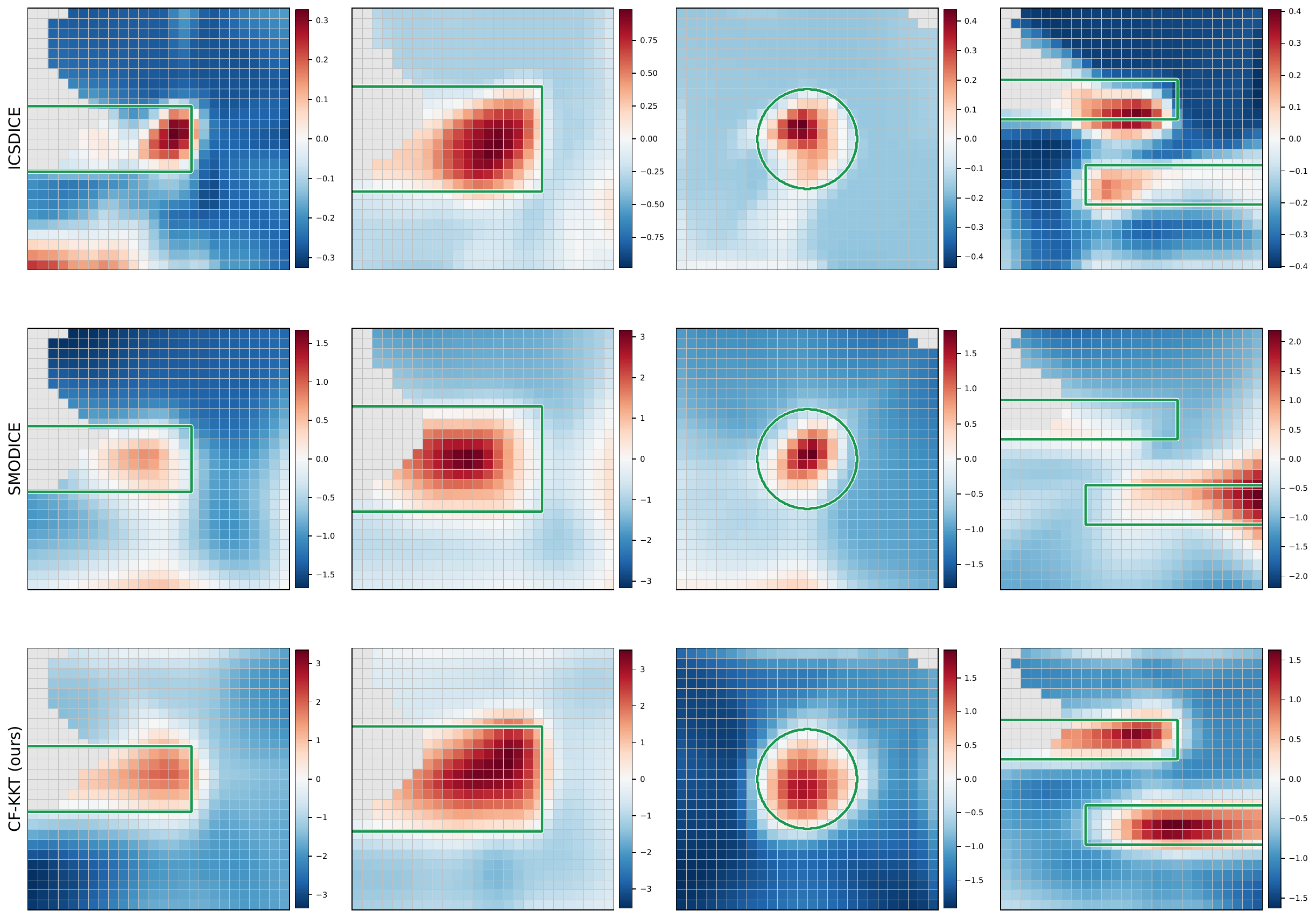}
    \vspace{-6pt}
    \caption{Ground-truth and learned constraint maps for 4 maze environments across ICSDICE (top), SMODICE (middle), and CF-KKT (bottom). 
    Red, blue, and green denote predicted unsafe states, predicted safe states, and the ground-truth constraint boundary.
    }
    \vspace{-15pt}
    \label{fig:cost_map}
\end{figure}
\vspace{-15pt}
\paragraph{Downstream Planning.}
We convert each learned constraint to a $61\times61$ occupancy grid, plan to the
goal with Dijkstra over predicted feasible cells, and track the waypoints with
the expert PD controller. Across 20 feasible starts per seed, we report goal
reaching, return (cumulative reward), and ground-truth violation rate
$\mathrm{Violation}=T^{-1}\sum_{t=0}^{T-1}\mathbf{1}[c(x_t,u_t)>0]$.
IQ-Learn does not recover a mask, so we evaluate its imitation policy directly. Representative planning examples are shown in App. \ref{app:maze-planning-vis}.

Table~\ref{tab:planning} shows the return-violation table. IQ-Learn
achieves a violation rate of $0.001$ on \texttt{Wide} but reaches the goal from
only $5/20$ starts. The oracle planner itself has nonzero violation on the
rectangular environments because PD tracking can clip corners, which provides
a planner-dependent floor. Relative to ICSDICE, CF-KKT lowers violation on all
four environments while retaining comparable goal-reaching performance:
$0.174\!\to\!0.042$ on \texttt{Obstacle},
$0.200\!\to\!0.085$ on \texttt{Wide},
$0.140\!\to\!0.080$ on \texttt{Disk}, and
$0.209\!\to\!0.093$ on \texttt{Zigzag}.

\begin{table}[t]
\centering
\scriptsize
\setlength{\tabcolsep}{2.5pt}
\renewcommand{\arraystretch}{0.88}
\resizebox{0.8\linewidth}{!}{%
\begin{tabular}{llccccc}
\toprule
Env. & Metric & Oracle & IQ-Learn & SMODICE & ICSDICE & CF-KKT \\
\midrule
\multirow{3}{*}{Obstacle}
& Goal & $20.0/20$ & $10.0/20$ & $18.3/20$ & $18.7/20$ & $20.0/20$ \\
& Viol. & $0.018$ & $0.124\pm0.203$ & $0.175\pm0.053$ & $0.174\pm0.038$ & $\mathbf{0.042\pm0.068}$ \\
& Return & $-58.2$ & $-89.7\pm5.8$ & $-47.1\pm7.4$ & $-45.8\pm1.6$ & $-55.6\pm6.8$ \\
\midrule
\multirow{3}{*}{Wide}
& Goal & $19.0/20$ & $5.0/20$ & $17.7/20$ & $18.3/20$ & $17.7/20$ \\
& Viol. & $0.032$ & $0.001\pm0.002$ & $0.226\pm0.111$ & $0.200\pm0.028$ & $\mathbf{0.085\pm0.121}$ \\
& Return & $-70.2$ & $-106.5\pm11.3$ & $-58.2\pm7.3$ & $-63.4\pm7.0$ & $-68.5\pm11.4$ \\
\midrule
\multirow{3}{*}{Disk}
& Goal & $20.0/20$ & $14.3/20$ & $19.3/20$ & $19.3/20$ & $20.0/20$ \\
& Viol. & $0.000$ & $0.041\pm0.039$ & $0.181\pm0.016$ & $0.140\pm0.060$ & $\mathbf{0.080\pm0.019}$ \\
& Return & $-48.5$ & $-84.0\pm2.4$ & $-46.4\pm0.6$ & $-47.1\pm0.8$ & $-48.3\pm0.5$ \\
\midrule
\multirow{3}{*}{Zigzag}
& Goal & $20.0/20$ & $7.0/20$ & $18.3/20$ & $17.3/20$ & $18.7/20$ \\
& Viol. & $0.060$ & $0.086\pm0.084$ & $0.230\pm0.033$ & $0.209\pm0.052$ & $\mathbf{0.093\pm0.044}$ \\
& Return & $-68.3$ & $-92.3\pm5.8$ & $-45.6\pm5.4$ & $-56.4\pm7.2$ & $-62.4\pm1.6$ \\
\bottomrule
\end{tabular}}
\vspace{-6pt}
\caption{Maze downstream planning over 20 starts.``Viol.'' is the true violating-timestep fraction.}
\label{tab:planning}
\vspace{-6pt}
\end{table}

\vspace{-12pt}
\subsection{High-Dimensional Neural Constraint Learning}
\label{subsec:exp-mujoco}

\vspace{-5pt}
Finally, we evaluate our method over the fully unstructured high-dimensional setting, where the transition dynamics are unknown and the constraint is represented by a neural
network rather than a known parametric family.
\vspace{-10pt}
\paragraph{Datasets.}
We use the five continuous-control environments and offline datasets from 
ICSDICE  \citep{quan2024learning}:
\texttt{Ant\_ls}, \texttt{BlockedAnt}, \texttt{BlockedHopper}, \texttt{HalfCheetahWithObstacle}, and \texttt{Walker\_ls}. 
Each environment has a hidden ground-truth constraint used only for evaluation. 
The benchmark contains $50{,}000$ expert transitions in $\D_E$ and $200{,}000$ suboptimal transitions in $\D_\pi$.
Under the data protocol above, the CF-KKT constraint learner receives $\D_E$, while SMODICE, IQ-Learn, and ICSDICE use 
more data, i.e.,
$\D_E \cup \D_\pi$. 
\vspace{-3mm}
\paragraph{Downstream Offline Policy Learning.}
Following the offline policy extraction
procedure of~\cite{quan2024learning}, we train the downstream value function and
advantage-weighted policy, while filtering
transitions according to the learned constraint. For the state-only constraints in these environments, we use the feasibility mask $M(x_{t+1})=\mathbf{1}[\hat c(x_{t+1};\theta)\leq0]$.
Transitions predicted to be unsafe are excluded from the value and
advantage-weighted policy updates. 

\vspace{-3mm}


\paragraph{Evaluation Metrics.}
For downstream control, we report \textbf{constraint violations}, defined as
the number of violating timesteps in a 1000-step evaluation episode, and
\textbf{return}, the cumulative task reward. Results are averaged
over $10$ random seeds.
\vspace{-3mm}
\paragraph{Downstream Safety and Return.}
Table~\ref{tab:mujoco_main} shows that CF-KKT reduces mean violations relative
to ICSDICE on all five tasks, by $34.6\%$--$96.8\%$. On \texttt{BlockedAnt}
and \texttt{BlockedHopper}, it also improves mean return. The remaining tasks
exhibit a safety--return trade-off; on \texttt{Walker\_ls}, SMODICE obtains
fewer violations but substantially lower return. Thus, the learned CF-KKT mask
consistently improves safety relative to ICSDICE while preserving useful task
performance.
\vspace{-10pt}
\begin{table}[t]
\centering
\scriptsize
\setlength{\tabcolsep}{1.8pt}
\renewcommand{\arraystretch}{0.9}
\resizebox{\linewidth}{!}{%
\begin{tabular}{lcccccccc}
\toprule
& \multicolumn{2}{c}{ICSDICE} & \multicolumn{2}{c}{IQ-Learn}
& \multicolumn{2}{c}{SMODICE} & \multicolumn{2}{c}{CF-KKT} \\
\cmidrule(lr){2-3}\cmidrule(lr){4-5}\cmidrule(lr){6-7}\cmidrule(lr){8-9}
Env. & Ret. & Viol. & Ret. & Viol. & Ret. & Viol. & Ret. & Viol. \\
\midrule
Ant\_ls
& $1257\pm123$ & $87.5\pm117.0$ & $65\pm155^{\dagger}$ & $6.3\pm1.8$
& $970\pm186$ & $86.8\pm113.5$ & $1130\pm91$ & $\mathbf{2.8\pm0.9}$ \\
BlockedAnt
& $2265\pm400$ & $82.3\pm56.1$ & $258\pm316^{\dagger}$ & $145.3\pm135.2$
& $1323\pm537^{\dagger}$ & $124.2\pm84.7$ & $2449\pm237$ & $\mathbf{15.3\pm4.2}$ \\
BlockedHopper
& $420\pm112$ & $110.6\pm26.6$ & $269\pm274^{\dagger}$ & $134.1\pm260.6$
& $325\pm275$ & $82.9\pm63.4$ & $454\pm252$ & $\mathbf{17.6\pm21.2}$ \\
HalfCheetah
& $2657\pm318$ & $83.0\pm58.9$ & $-270\pm447^{\dagger}$ & $0.0\pm0.0$
& $1915\pm977$ & $313.1\pm349.0$ & $2338\pm788$ & $\mathbf{54.3\pm48.9}$ \\
Walker\_ls
& $682\pm392$ & $170.1\pm106.9$ & $361\pm233$ & $39.0\pm16.8$
& $140\pm224$ & $\mathbf{37.6\pm56.5}$ & $410\pm283$ & $74.3\pm56.3$ \\
\bottomrule
\end{tabular}}
\vspace{-6pt}
\caption{MuJoCo downstream return and violations ($N=10$). Bold marks the lowest mean violation among explicit-mask methods; $^{\dagger}$ denotes severe return collapse.}
\label{tab:mujoco_main}
\vspace{-15pt}

\end{table}

\section{Conclusion}
\label{sec: Conclusion}
\vspace{-6pt}

We present a framework for learning unknown constraints from expert demonstrations under unknown dynamics. 
For known parametric constraints, we combine learned local dynamics with CIOC-based recovery and analyze sensitivity to model error. For neural constraints, CF-KKT combines KKT-inspired expert supervision with synthetic reward-improving counterfactuals to provide off-demonstration constraint information without violations.
Experiments on robotic control tasks show improved constraint recovery and downstream safety over offline ICRL baselines. 
Limitations include environment-dependent loss weights, the assumption of a known differentiable reward, omission of cross-time coupling in the transition-wise KKT objective, and dependence of counterfactual quality on learned dynamics away from the demonstrations. 
Automating loss weighting and incorporating reward and dynamics uncertainty are promising directions for future work.


\section*{Generative AI Use Disclosure}

Generative AI tools were used in the preparation of this work to assist with
refining mathematical statements and annotations, providing feedback on human-designed experimental
methodology and ablation, and
improving the organization, clarity, and presentation of the manuscript.
They were also used for language editing, restructuring sections, and
formatting equations and tables.

All AI-assisted mathematical arguments were reviewed and revised by the
authors, and all reported experimental results were produced and verified by
the authors. Generative AI was not treated as a source of scientific evidence
or authority; technical claims, derivations, and conclusions were checked
against the underlying analysis and experimental results. The authors take full
responsibility for the final content of the paper, including all text, claims,
and artifacts produced with the assistance of generative AI.

\bibliography{references}
\bibliographystyle{iclr2027_conference}

\appendix
\clearpage

\section{Benchmark Environments and Datasets}
\label{app:envs}
This appendix collects environment and dataset details omitted from the main experimental section.

\subsection{Known-Parametric Experiments}
\label{app:parametric-envs}
For \texttt{HalfCheetah} and \texttt{BlockedAnt}, the unknown constraint is the position lower bound $c(x;\theta)=\theta-x_{\mathrm{pos}}\leq0$, with true threshold $\theta^*=-3.0$. 
Their trajectory data comes from the ICRL benchmark of~\cite{liu2023benchmarking}. For \texttt{Swimmer} and \texttt{Quadcopter}, the constraint encodes spherical obstacle avoidance, $c(x;\theta)=\theta-\|p(x)-p_{\mathrm{obs}}\|_2^2\leq0$, where $p(x)$ is the position of the designated body (the swimmer nose or quadcopter center of mass) and $p_{\mathrm{obs}}$ is the known obstacle center. 
The true squared radii are $\theta^*=0.04$ and $64$, respectively. Their expert trajectories are generated offline with MuJoCo-MPC \citep{howell2022}. 
The parametric experiment uses one expert trajectory per environment to fit the local dynamics and recover $\theta$.

\begin{figure}[t]
    \centering
    \includegraphics[width=0.99\textwidth]{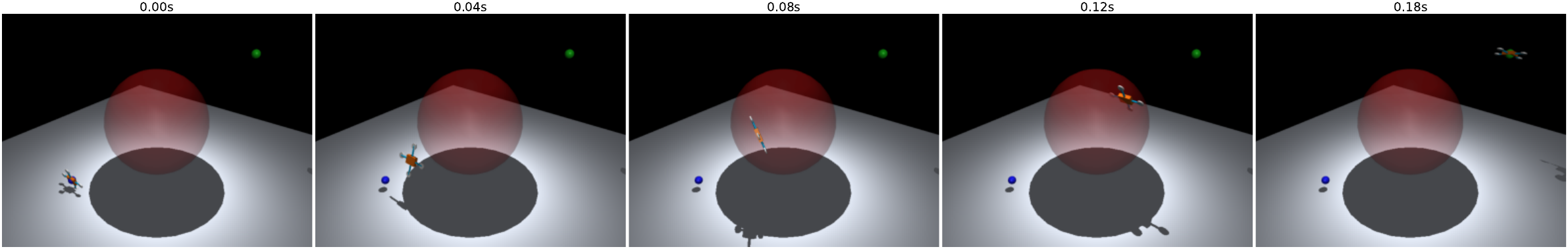}
    \vspace{2pt}
    \includegraphics[width=0.99\textwidth]{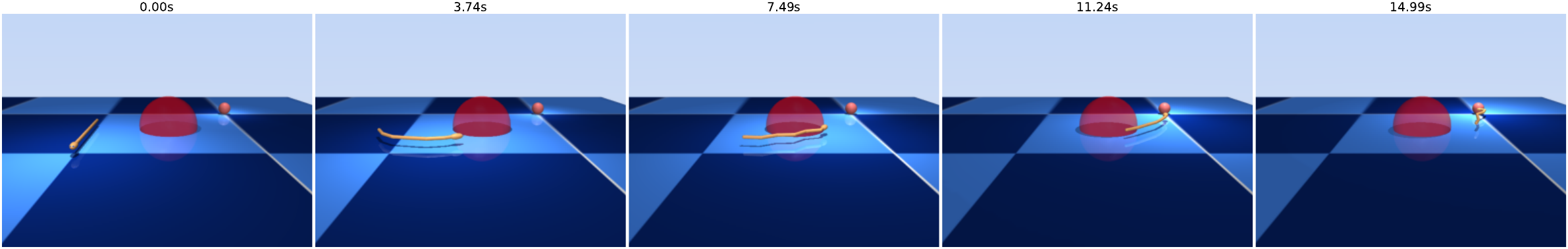}
    \caption{\texttt{Quadcopter} (top) and \texttt{Swimmer} (bottom) goal-reaching trajectories with the learned collision-avoidance constraint; red denotes the unsafe set.}
    \label{fig:timelapse}
\end{figure}

\subsection{Continuous-Maze Experiments}
\label{app:maze-envs}
We use a double-integrator system on
$\stateSpace=[-1,1]^2\times\R^2$, with $x_t=(p_t,v_t)\in\R^4$ and
$u_t\in[-1,1]^2$:
\begin{align}
    v_{t+1}&=(1-\kappa)v_t+\Delta t\,\alpha u_t,
    \qquad \|v_{t+1}\|_2\leq v_{\max},\\
    p_{t+1}&=\Pi_{\mathcal X}(p_t+\Delta t\,v_{t+1}),
\end{align}
with $\Delta t=0.1$, $\alpha=1$, $\kappa=0.15$, and $v_{\max}=2$. The reward is
$r(x_t,u_t)=-\|p_t-g\|_2-0.02\|u_t\|_2^2$ with $\gamma=0.99$. We consider four
unknown state-only unsafe regions---\texttt{Obstacle}, \texttt{Wide},
\texttt{Disk}, and \texttt{Zigzag}---covering wall-attached, compact, and
non-convex geometries. Although the ground-truth constraint depends only on
position, the learner is not given this structural knowledge.

Each environment and seed contains approximately $30{,}000$ feasible expert
transitions. Expert paths are generated by Dijkstra on a $61\times61$ occupancy
grid and tracked by a PD controller. IQ-Learn, SMODICE, and ICSDICE additionally
receive a $7000$-transition suboptimal dataset that may contain constraint
violations.

\subsection{MuJoCo Neural-Constraint Experiments}
\label{app:mujoco-envs}
\paragraph{Task specifications.}We evaluate CF-KKT on five continuous-control tasks built from MuJoCo
locomotion environments in Gymnasium (\texttt{-v4}), following the benchmark
used by ICSDICE \citep{quan2024learning}. Each task augments the nominal
locomotion objective with a state constraint that is not explicitly penalized
by the task reward.

\begin{table}[h]
\centering
\small
\setlength{\tabcolsep}{4pt}
\begin{tabular}{llccl}
\toprule
\textbf{Task} & \textbf{Base environment} & $\dim(\stateSpace)$ & $\dim(\controlSpace)$
 & \textbf{Constraint: infeasible when} \\
\midrule
Ant\_ls       & Ant-v4         & 27 & 8 & $v_x > 0.5$ \\
BlockedAnt    & Ant-v4         & 27 & 8 & $\sum_{j \in \mathcal{J}} |\omega_j| > 1$ \\
BlockedHopper & Hopper-v4      & 11 & 3 & $|\phi_{\mathrm{foot}}| > 0.3$ \\
HalfCheetah   & HalfCheetah-v4 & 18 & 6 & $x_{\mathrm{pos}} < -3$ \\
Walker\_ls    & Walker2d-v4    & 17 & 6 & $v_x > 1$ \\
\bottomrule
\end{tabular}
\caption{MuJoCo benchmark specifications. Here $v_x$ denotes forward velocity,
$\omega_j$ the angular velocity of joint $j$ with $\mathcal J$ denoting the
four constrained joints, $\phi_{\mathrm{foot}}$ the foot-joint angle, and
$x_{\mathrm{pos}}$ the absolute horizontal torso position.}
\label{tab:env-specs}
\end{table}

\begin{table}[h]
\centering
\small
\begin{tabular}{lll}
\toprule
\textbf{Task} & \textbf{Reward} & \textbf{Constrained quantity} \\
\midrule
Ant\_ls       & $v_x + 1 - 0.5\|u\|^2$    & forward velocity (rewarded) \\
BlockedAnt    & $v_x + 1 - 0.5\|u\|^2$    & joint angular velocities \\
BlockedHopper & $v_x + 1 - 10^{-3}\|u\|^2$ & foot-joint angle \\
HalfCheetah   & $|v_x| - 0.1\|u\|^2$      & horizontal position \\
Walker\_ls    & $v_x + 1 - 10^{-3}\|u\|^2$ & forward velocity (rewarded) \\
\bottomrule
\end{tabular}
\caption{Task rewards and constrained physical quantities. The constant $+1$
is the survival bonus in the corresponding locomotion task.
\texttt{HalfCheetah} rewards $|v_x|$, so motion in either horizontal direction
contributes positively to the task reward.}
\label{tab:env-rewards}
\end{table}

\paragraph{Relationship between reward and constraint.}

The five tasks differ in how directly reward-improving behavior drives the
constrained physical quantity. This distinction helps explain the
counterfactual diagnostics in Appendix~\ref{app:cf-premise} and the supervision
ablation in Appendix~\ref{app:ablation}.

\textit{Direct coupling.}
In \texttt{Ant\_ls} and \texttt{Walker\_ls}, the constraint directly bounds
forward velocity, which also appears positively in the task reward. Reward
improvement therefore tends to move the system toward the constraint boundary,
making positive-value-advantage counterfactuals informative for constraint
inference.

\textit{Indirect coupling.}
\texttt{BlockedAnt} constrains joint angular velocities while rewarding forward
body velocity. These quantities are correlated through the locomotion dynamics,
but the constrained quantity does not appear directly in the reward.

\textit{Delayed angular coupling.}
\texttt{BlockedHopper} constrains a joint angle, which evolves over multiple
dynamics steps rather than changing instantaneously with the control. As a
result, violations can emerge progressively rather than at the first
counterfactual step.

\textit{Delayed positional coupling.}
\texttt{HalfCheetah} constrains absolute horizontal position, whereas action
perturbations affect position indirectly through the evolution of velocity.
Multi-step rollouts are therefore important for allowing perturbations to
propagate to the constrained quantity. Appendix~\ref{app:cf-premise}
quantifies these coupling effects using counterfactual violation diagnostics.

\paragraph{Datasets.}

Each task provides approximately $5\times10^4$ expert transitions in $\D_E$,
organized into $50$ trajectories, together with $2\times10^5$ transitions in
the benchmark suboptimal dataset $\D_\pi$.
\texttt{Hopper} and \texttt{Walker2d} can terminate early when the agent falls,
so their expert trajectories need not have equal length
(e.g., \texttt{BlockedHopper} contains $49{,}945$ transitions across $50$
episodes); the remaining tasks use fixed $1000$-step episodes. Ground-truth
constraint labels are reserved for evaluation and oracle diagnostics.

\clearpage
\section{Experiment Details}
\label{sec: Experiment Details}

\paragraph{Experiment Setup.}
All networks are two-layer MLPs with 256 hidden units and ReLU activations,
implemented in JAX/Flax. The constraint network $c_\theta$, dynamics model
$f_\psi$, continuation-value model $\hat V$, and downstream policy $\pi_\phi$
use this architecture. All networks are trained with Adam at a learning rate
of $3\times10^{-4}$. We use a mini-batch size of $256$ and discount factor
$\gamma=0.99$. For the KKT terms, the default weights are
$w_p=10.0$, $w_c=0.01$, and $w_a=0.1$, with outer scale
$\alpha_{\mathrm{KKT}}=1$; proxy dual bounds are
$\lambda_{\max}=\nu_{\max}=100$.

The continuation-value model is trained from the expert trajectories by
regressing to the discounted return-to-go. For an expert state
$x_i^\paren{d}$, we use the target
\begin{align}
    G_i^\paren{d}
    =
    \sum_{j=i}^{T}
    \gamma^{j-i}
    r\!\left(x_j^\paren{d},u_j^\paren{d}\right),
\end{align}
and minimize
\begin{align}
    \loss_V
    =
    \E_{\D_E}
    \left[
    \left(
    \hat V(x_i^\paren{d})-G_i^\paren{d}
    \right)^2
    \right].
\end{align}
The resulting $\hat V$ is used only to score the terminal states of
counterfactual and expert-action rollouts in the value-augmented advantage.

All results are averaged over the same set of $N=10$ random seeds. Experiments
are conducted on a cloud-based Linux compute node with 32 CPU cores
(Intel Xeon Gold 6426Y, 2.5 GHz), 64 GB system memory, and one NVIDIA L40S GPU
with 46 GB VRAM. One seed with 20,000 total training steps takes approximately
5--10 minutes. Counterfactual generation uses $B=256$ sampled expert windows,
$K=64$ perturbed action sequences, perturbation scale $\sigma=2$ (for MuJoCo), $\sigma = 0.8$ (for 2D maze), and rollout
horizon $H=24$. The
counterfactual margin is $m=2$ in the 2D maze experiments and $m=1$ in the
MuJoCo experiments, with $\alpha_{\mathrm{CF}}=1$.


In MuJoCo environments, the downstream policy extraction involves using suboptimal data for Advantage Weighted Regression. We hereby retain this evaluation protocol consistent with ICSDICE and SMODICE because they explicitly \textit{require} a suboptimal dataset to function as a constraint learner. We clarify that the additional suboptimal data never enters the constraint learning phase for CF-KKT, and is used solely for evaluation.

\subsection{Maze Planning Visualizations}
\label{app:maze-planning-vis}
Figure~\ref{fig:heatmap} shows representative downstream plans produced from the
learned constraint masks.
\begin{figure}[t]
    \centering
    \includegraphics[width=0.9\linewidth]{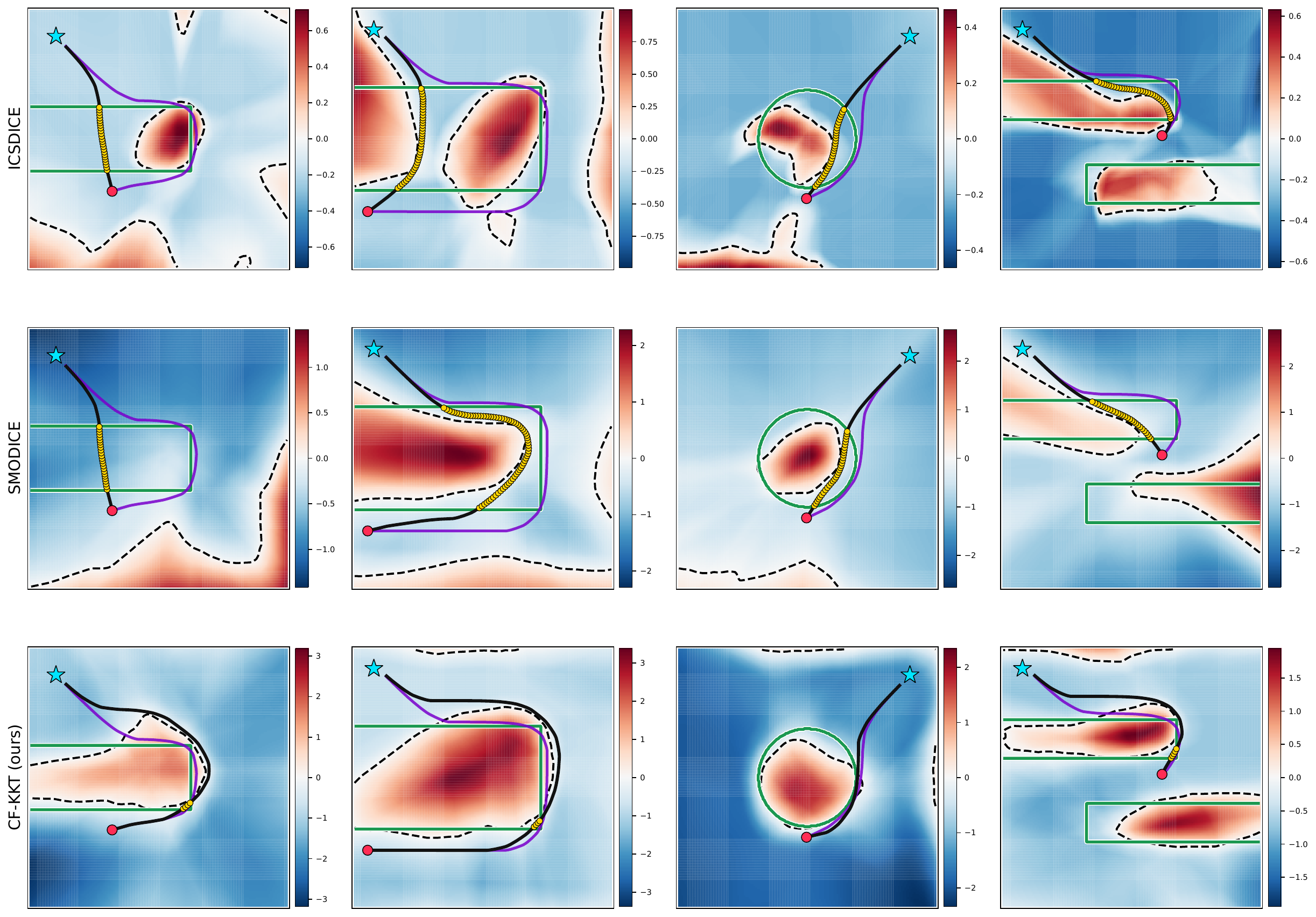}
    \caption{Representative maze plans. The dashed contour is the predicted unsafe boundary, green the true unsafe region, purple the oracle route, and yellow the executed timesteps inside the true unsafe region. The goal is indicated by blue star.}
    \label{fig:heatmap}
\end{figure}

\subsection{Robustness to Learned Reward Models}
\label{subsec:exp-reward-model}
Our main method assumes access to the analytical task reward, as is standard in
inverse optimal control, because KKT stationarity depends explicitly on
$\nabla_{x,u}r(x,u)$. We evaluate how sensitive CF-KKT is to relaxing this
assumption by replacing $r$ with a neural approximation
$\hat r(\cdot,\cdot;\omega)$.

A reward model trained only to match scalar reward values need not recover the
correct local reward geometry: small prediction error in $\hat r$ does not
imply that $\nabla_{x,u}\hat r$ matches $\nabla_{x,u}r$, even though the latter
directly enters the KKT stationarity residual. We therefore compare two reward
models. \textbf{MLP} minimizes only reward-value prediction error, while
\textbf{MLP + reg.} uses the same architecture and additionally penalizes
reward-gradient mismatch:
\begin{align}
\label{Eqn: Reward Gradient Regularization}
    \loss_r(\omega)
    =
    \E\Big[
    &\big(
    \hat r(x_t,u_t;\omega)-r(x_t,u_t)
    \big)^2+
    w_{\nabla r}
    \left\|
    \nabla_{x_t,u_t}\hat r(x_t,u_t;\omega)
    -
    \nabla_{x_t,u_t}r(x_t,u_t)
    \right\|_2^2
    \Big].
\end{align}
Here $w_{\nabla r}=0$ recovers the plain MLP, whereas
$w_{\nabla r}>0$ explicitly regularizes the learned reward gradient toward the
analytical one. In this ablation, $\hat r$ replaces $r$ in the KKT stationarity
term and in the counterfactual return calculation.

Table~\ref{tab:learned_reward} shows that gradient regularization reduces
constraint violations on all five environments, with especially large
reductions on \texttt{HalfCheetahWithObstacle} ($133.5\!\to\!51.1$) and
\texttt{BlockedHopper} ($65.1\!\to\!37.3$). These results indicate that CF-KKT
can remain effective when the analytical reward is replaced by a learned
approximation, provided that the local reward-gradient structure used by the
KKT conditions is also preserved.

\begin{table}[t]
\centering
\caption{Learned-reward robustness (mean $\pm$ std, $N=10$); MLP + reg. also matches analytical reward gradients.}
\label{tab:learned_reward}
\scriptsize
\setlength{\tabcolsep}{3pt}
\renewcommand{\arraystretch}{0.9}
\begin{tabular}{lcccc}
\toprule
& \multicolumn{2}{c}{Violations $\downarrow$}
& \multicolumn{2}{c}{Return $\uparrow$} \\
\cmidrule(lr){2-3}\cmidrule(lr){4-5}
Environment & MLP & MLP + reg. & MLP & MLP + reg. \\
\midrule
Ant\_ls                 & $7.1 \pm 13.8$   & $2.3 \pm 0.5$    & $1177 \pm 98$  & $1177 \pm 68$  \\
BlockedAnt          & $89.4 \pm 40.2$  & $67.0 \pm 19.4$  & $2142 \pm 621$ & $2165 \pm 552$ \\
BlockedHopper       & $65.1 \pm 26.9$  & $37.3 \pm 30.3$  & $342 \pm 239$  & $277 \pm 102$  \\
HalfCheetah (obst.) & $133.5 \pm 32.7$ & $51.1 \pm 33.9$  & $3126 \pm 407$ & $2938 \pm 408$ \\
Walker\_ls              & $138.0 \pm 76.4$ & $100.2 \pm 76.5$ & $676 \pm 420$  & $447 \pm 309$  \\
\bottomrule
\end{tabular}
\end{table}

\subsection{Additional Benchmarking Against Online ICRL Methods.}
Our main experiments focus on offline constraint inference. As an auxiliary
comparison, we also evaluate the \emph{KKT-informed component} of our framework
within the online ICRL benchmark of~\cite{liu2023benchmarking}. This experiment
does not use the counterfactual component of CF-KKT and should therefore be
interpreted as a diagnostic of the KKT-based constraint signal rather than an
evaluation of the full method.

We compare this KKT-only constraint learner with BC2L, MECL, and GACL in the
\texttt{BlockedHalfCheetah} environment. The online baselines receive
additional interaction data from the environment, whereas the KKT-only
constraint learner relies on the initial expert demonstrations. Figure
\ref{fig:online-icrl} reports downstream policy performance when each method is
used as the upstream constraint-learning component. The KKT-only variant
achieves lower constraint violation and higher feasible reward than BC2L and
MECL, while remaining comparable to GACL, and exhibits lower variance than
BC2L and MECL.

\begin{figure}[t]
    \centering
    \includegraphics[width=0.48\linewidth]{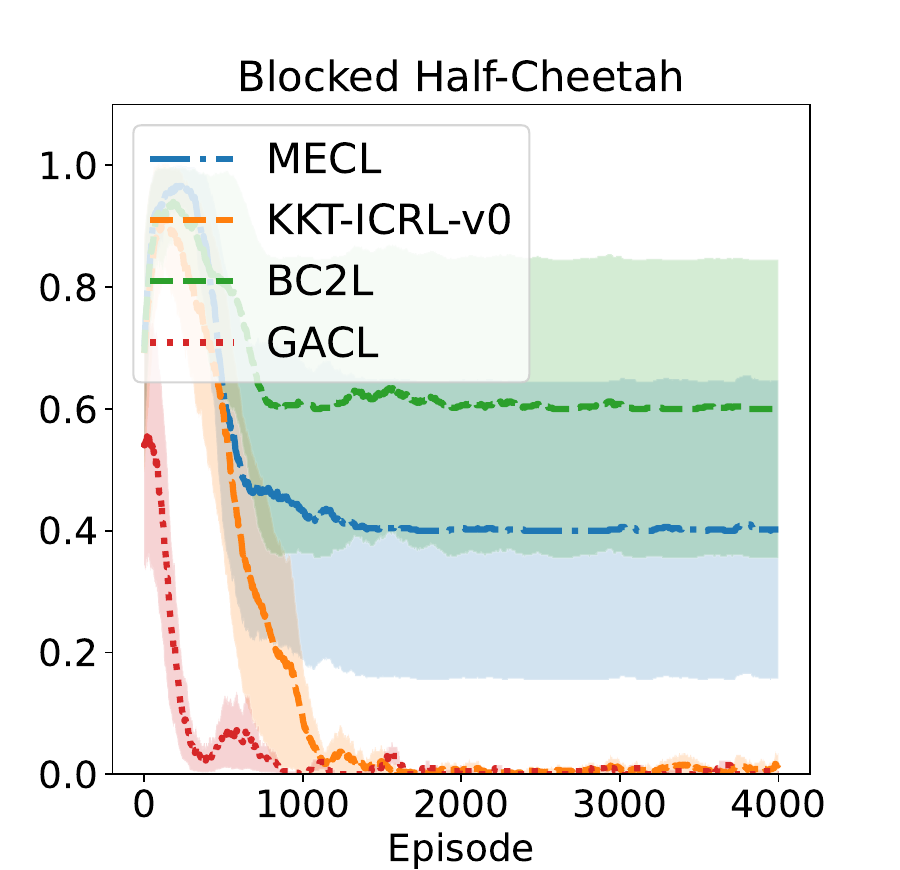}
    \hfill
    \includegraphics[width=0.48\linewidth]{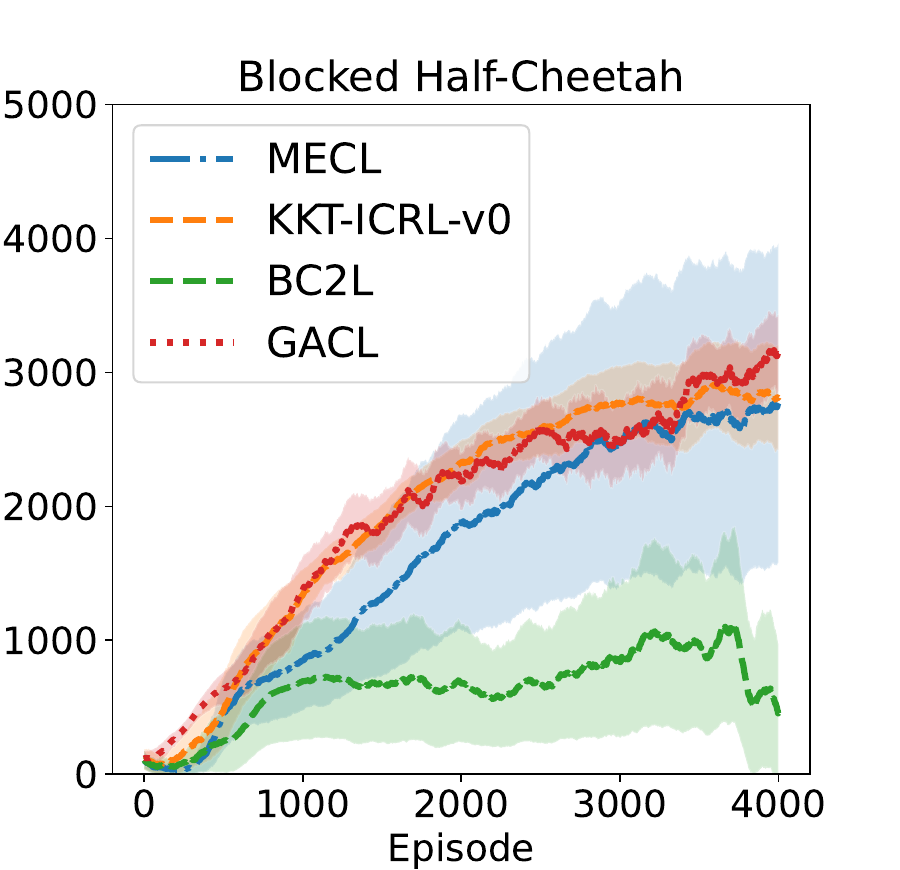}
    \caption{Constraint violation rate (left) and feasible cumulative reward
    (right) for the auxiliary online ICRL comparison.}
    \label{fig:online-icrl}
\end{figure}

\subsection{Method Sweep on Counterfactual generation}
We separately study the two design choices used in counterfactual generation:
the rollout horizon $H$ and the counterfactual acceptance rule. First, we vary
$H\in\{12,24,48\}$ while keeping the positive-advantage gate
$w_k=\mathbf{1}[\hat A_V^{(k)}>0]$ fixed, where $H=24$ corresponds to the
default CF-KKT setting. Second, fixing $H=24$, we compare this gate against a
quantile-based rule that retains the top $25\%$ of counterfactuals ranked by
$\hat A_V^{(k)}$.

\begin{table}[t]
\centering
\tiny
\setlength{\tabcolsep}{2.5pt}
\renewcommand{\arraystretch}{0.80}
\resizebox{\linewidth}{!}{%
\begin{tabular}{l cc cc cc}
\toprule
& \multicolumn{2}{c}{$H=12$}
& \multicolumn{2}{c}{$H=48$}
& \multicolumn{2}{c}{CF-KKT ($H=24$)} \\
\cmidrule(lr){2-3}
\cmidrule(lr){4-5}
\cmidrule(lr){6-7}
Env.
& Viol. $\downarrow$ & Ret. $\uparrow$
& Viol. $\downarrow$ & Ret. $\uparrow$
& Viol. $\downarrow$ & Ret. $\uparrow$ \\
\midrule
Ant\_ls
& $3.4\!\pm\!1.5$ & $1157\!\pm\!57$
& $3.3\!\pm\!1.6$ & $\mathbf{1167\!\pm\!84}$
& $\mathbf{2.8\!\pm\!0.9}$ & $1130\!\pm\!91$ \\

BlockedAnt
& $16.5\!\pm\!3.3$ & $2136\!\pm\!635$
& $18.5\!\pm\!4.9$ & $1968\!\pm\!733$
& $\mathbf{15.3\!\pm\!4.2}$ & $\mathbf{2449\!\pm\!237}$ \\

BlockedHopper
& $64.9\!\pm\!83.5$ & $317\!\pm\!139$
& $28.6\!\pm\!32.7$ & $\mathbf{510\!\pm\!433}$
& $\mathbf{17.6\!\pm\!21.2}$ & $454\!\pm\!252$ \\

HalfCheetah
& $93.9\!\pm\!67.5$ & $2196\!\pm\!583$
& $80.2\!\pm\!78.7$ & $\mathbf{2605\!\pm\!621}$
& $\mathbf{54.3\!\pm\!48.9}$ & $2338\!\pm\!788$ \\

Walker\_ls
& $85.4\!\pm\!43.1$ & $\mathbf{413\!\pm\!213}$
& $89.6\!\pm\!65.6$ & $406\!\pm\!279$
& $\mathbf{74.3\!\pm\!56.3}$ & $410\!\pm\!283$ \\
\bottomrule
\end{tabular}%
}
\vspace{-4pt}
\caption{Effect of counterfactual rollout horizon with the
positive-advantage gate fixed. CF-KKT uses the default $H=24$.}
\label{tab:horizon-sweep}
\vspace{-6pt}
\end{table}

\begin{table}[t]
\centering
\scriptsize
\setlength{\tabcolsep}{3.5pt}
\renewcommand{\arraystretch}{0.82}
\begin{tabular}{l cc cc}
\toprule
& \multicolumn{2}{c}{$q=0.25$}
& \multicolumn{2}{c}{CF-KKT} \\
\cmidrule(lr){2-3}
\cmidrule(lr){4-5}
Env.
& Viol. $\downarrow$ & Ret. $\uparrow$
& Viol. $\downarrow$ & Ret. $\uparrow$ \\
\midrule
Ant\_ls
& $3.3\!\pm\!1.7$ & $1117\!\pm\!102$
& $\mathbf{2.8\!\pm\!0.9}$ & $\mathbf{1130\!\pm\!91}$ \\

BlockedAnt
& $21.8\!\pm\!14.4$ & $2096\!\pm\!604$
& $\mathbf{15.3\!\pm\!4.2}$ & $\mathbf{2449\!\pm\!237}$ \\

BlockedHopper
& $27.3\!\pm\!32.3$ & $\mathbf{624\!\pm\!261}$
& $\mathbf{17.6\!\pm\!21.2}$ & $454\!\pm\!252$ \\

HalfCheetah
& $84.6\!\pm\!58.2$ & $\mathbf{2735\!\pm\!494}$
& $\mathbf{54.3\!\pm\!48.9}$ & $2338\!\pm\!788$ \\

Walker\_ls
& $97.3\!\pm\!92.3$ & $\mathbf{422\!\pm\!393}$
& $\mathbf{74.3\!\pm\!56.3}$ & $410\!\pm\!283$ \\
\bottomrule
\end{tabular}
\vspace{-4pt}
\caption{Effect of the counterfactual acceptance rule at $H=24$.
The quantile rule retains the top $25\%$ of candidates by
$\hat A_V^{(k)}$, whereas CF-KKT retains all candidates with
$\hat A_V^{(k)}>0$.}
\label{tab:acceptance-sweep}
\vspace{-6pt}
\end{table}

The horizon sweep in Table~\ref{tab:horizon-sweep} shows that CF-KKT is not highly sensitive to the precise
rollout length: $H=12$, $24$, and $48$ all yield comparable downstream
performance, while the default $H=24$ achieves the lowest mean violation count
on all five environments. The acceptance-rule comparison in Table~\ref{tab:acceptance-sweep} shows CF-KKT's positive-advantage-based gating achieves lower violation than the quantile-based rule. 
\section{Empirical Validation of the Counterfactual Premise}
\label{app:cf-premise}

CF-KKT retains a generated counterfactual when its learned-model
value-augmented advantage is positive, i.e.,
$\hat A_V^\paren{k}>0$. Lemma~\ref{lem:counterfactual-infeasibility}
establishes that, under global expert optimality, exact dynamics, and the exact
constrained continuation value, a positive value-augmented advantage implies
that the counterfactual cannot remain feasible throughout the rollout. In
practice, CF-KKT evaluates this criterion using the learned dynamics
$\hat f$ and continuation-value estimate $\hat V$. Here we empirically assess
how reliably the resulting practical selection rule identifies true
constraint violations. Ground-truth constraint labels are used exclusively
for this diagnostic and are never used to train CF-KKT.

\paragraph{Metrics.}
For each selected counterfactual rollout
$\{x_h^\paren{k}\}_{h=1}^{H}$, we evaluate the benchmark ground-truth
state constraint $c^\star$ and report:
\begin{itemize}
    \item \textbf{Bag precision:}
    $\Pr[\exists h:\,c^\star(x_h^\paren{k})>0
    \mid \hat A_V^\paren{k}>0]$, the fraction of selected rollouts containing
    at least one truly infeasible state. This is the metric most directly
    aligned with the trajectory-level implication in
    Lemma~\ref{lem:counterfactual-infeasibility}.

    \item \textbf{State precision:}
    $\Pr[c^\star(x_h^\paren{k})>0
    \mid \hat A_V^\paren{k}>0]$, evaluated over all states in selected
    rollouts. Unlike bag precision, this metric measures the density of
    infeasible states within a selected rollout; the lemma does not require
    every state to be infeasible.

    \item \textbf{Enrichment:} the state-level violation rate among selected
    counterfactual rollouts divided by the violation rate in the benchmark
    suboptimal dataset $\D_\pi$. This metric measures whether the generator
    preferentially samples infeasible behavior relative to the benchmark
    background distribution. Here $\D_\pi$ serves only as the reference
    distribution for this metric.

    \item \textbf{Per-step violation rate:} the violation frequency at the
    first and final rollout states, $h=1$ and $h=H$, indicating how
    infeasibility develops over the counterfactual horizon.
\end{itemize}

\begin{table}[h]
\centering
\small
\begin{tabular}{lccccc}
\toprule
\textbf{Environment}
 & \textbf{Bag prec.} & \textbf{State prec.} & \textbf{Enrich.}
 & $h{=}1$ & $h{=}H$ \\
\midrule
Ant\_ls        & 1.000 & 0.848 & 3.75 & 0.87 & 0.81 \\
BlockedAnt     & 1.000 & 1.000 & 2.72 & 1.00 & 1.00 \\
BlockedHopper  & 0.922 & 0.749 & 1.37 & 0.08 & 0.72 \\
Walker\_ls     & 1.000 & 0.969 & 1.31 & 1.00 & 0.95 \\
HalfCheetah    & 0.953 & 0.867 & 1.78 & 0.00 & 0.83 \\
\bottomrule
\end{tabular}
\caption{Ground-truth diagnostics of selected CF-KKT counterfactuals over
$N=10$ seeds. Metrics are defined in the text.}
\label{tab:cf-premise}
\end{table}

\paragraph{Positive value-augmented advantage consistently identifies infeasible rollouts.}
Bag precision is at least $0.922$ and enrichment exceeds $1$ on all five
environments. Thus, the practical learned-model selection rule reliably
identifies trajectory-level infeasibility and concentrates selected rollouts
in regions with more violations than the benchmark suboptimal distribution.

\paragraph{Constraint violations may emerge only after several model steps.}
This effect is most apparent on \texttt{BlockedHopper} and
\texttt{HalfCheetah}. In \texttt{BlockedHopper}, the constraint bounds a joint
angle, so action perturbations influence the constrained quantity progressively
through the dynamics; the violation rate increases from $0.08$ at $h=1$ to
$0.72$ at $h=H$. In \texttt{HalfCheetah}, the constraint depends on absolute
horizontal position, which responds indirectly through the evolution of
velocity. Consequently, no selected counterfactual violates the constraint at
$h=1$, whereas the violation rate reaches $0.83$ by $h=H$. These results
support the use of multi-step rollouts for constraints whose effects are
dynamically delayed.

\paragraph{State-level density supports the mean-based surrogate.}
State precision ranges from $0.749$ to $1.000$ across the five tasks, showing
that violations are typically not isolated to a single state within a selected
rollout. This empirical density supports the mean constraint value in the
counterfactual loss as a dense, smooth surrogate, even though the theoretical
guarantee requires only the existence of a violation somewhere along the
trajectory.

\section{Ablation of Supervision Signals}
\label{app:ablation}

\subsection{Experimental Setting}

CF-KKT combines expert-side KKT supervision with synthetic counterfactual
supervision. Because these MuJoCo benchmarks use state-only constraints, we
write $\hat c_\theta(x)$ for brevity. To isolate the contribution of each term,
we decompose the expert-side objective into
\begin{align}
\mathcal{L}_{\mathrm{stat}}
&=
\E_{\D_E}\!\left[
\left\|\nabla_{\xi}\hat L(\xi)\right\|_2^2
\right],\\
\mathcal{L}_{\mathrm{prim}}
&=
\E_{\D_E}\!\left[
\max\{\hat c_\theta,0\}^2
\right],\\
\mathcal{L}_{\mathrm{comp}}
&=
\E_{\D_E}\!\left[
(\lambda \hat c_\theta)^2
\right],\\
\mathcal{L}_{\mathrm{act}}
&=
\E_{\D_E}\!\left[
|\hat c_\theta|
\right].
\end{align}
The complete KKT-informed loss is
\begin{align}
\loss_\KKT
=
\mathcal{L}_{\mathrm{stat}}
+
w_p\mathcal{L}_{\mathrm{prim}}
+
w_c\mathcal{L}_{\mathrm{comp}}
+
w_a\mathcal{L}_{\mathrm{act}},
\end{align}
with $w_p=10$, $w_c=0.01$, and $w_a=0.1$. We further define a generic classifier that does not use KKT optimality:
\begin{align}
\mathcal{L}_{\mathrm{safe}}
=
\E_{\D_E}\!\left[
w_p\max\{\hat c_\theta,0\}^2
+
w_a|\hat c_\theta|
\right],
\end{align}
and denote the counterfactual loss by
\begin{align}
\loss_{\mathrm{CF}}
=
\E_b\!\left[
\max\!\left\{
m-\frac{1}{H}\sum_h \hat c_\theta(x_{b,h}),\,0
\right\}^{2}
\right],
\qquad m=1.
\end{align}

We compare the following seven controlled conditions:
\begin{itemize}
    \item \textbf{$A$:}
    $\mathcal{L}_{\mathrm{safe}}+\loss_{\mathrm{CF}}$.
    This retains expert-side feasibility/activity supervision and
    counterfactual supervision, but removes the optimality residual from Lagrangians.

    \item \textbf{$B$:}
    $A+w_c\mathcal{L}_{\mathrm{comp}}$.
    This isolates the additional contribution of complementary slackness.

    \item \textbf{$A^{-\mathrm{act}}$:}
    $A$ with $w_a=0$.
    This isolates the effect of the activity term.

    \item \textbf{$A^{-\mathrm{cf}}$:}
    $\mathcal{L}_{\mathrm{safe}}$ only.
    This removes counterfactual supervision.

    \item \textbf{CF-only:}
    $\loss_{\mathrm{CF}}$ only.
    This removes all supervision acting directly on expert demonstrations.

    \item \textbf{KKT-only:}
    $\loss_\KKT$ only, obtained by setting $\alpha_{\mathrm{CF}}=0$. This
    condition therefore isolates what the optimality conditions alone identify,
    and serves as the counterpart of CF-only.

    \item \textbf{CF-KKT:}
    the complete objective
    $\alpha_\KKT\loss_\KKT+\alpha_{\mathrm{CF}}\loss_{\mathrm{CF}}$.
\end{itemize}
All experimental settings other than the indicated objective terms are held
fixed across conditions.

\subsection{Metrics}

We evaluate the learned constraint directly using two complementary metrics.
First, the mask admission rate
\begin{align}
\rho_\theta
=
\E_{\D_\pi}\!\left[
\mathbf{1}\!\left[\hat c_\theta(x')\leq0\right]
\right]
\end{align}
is the fraction of transitions in the benchmark suboptimal dataset admitted by
the learned feasibility mask. We compare this quantity with $\rho^\star$, the
fraction admitted by the ground-truth constraint. Values substantially below
$\rho^\star$ indicate an overly restrictive mask, while values substantially
above $\rho^\star$ indicate an overly permissive one.

Second, we report ROC AUC of $\hat c_\theta$ against ground-truth constraint
labels on held-out suboptimal transitions. AUC is threshold-free and therefore
measures whether the learned score ranks feasible and infeasible states
correctly independently of the operating threshold. Ground-truth labels are
used only for evaluation.

\subsection{Results}

\begin{table}[h]
\centering
\small
\resizebox{\linewidth}{!}{%
\begin{tabular}{lcccccccc}
\toprule
\textbf{Environment}
& $\rho^\star$
& $A$
& $B$
& $A^{-\mathrm{act}}$
& $A^{-\mathrm{cf}}$
& \textbf{CF-only}
& \textbf{KKT-only}
& \textbf{CF-KKT} \\
\midrule
Ant\_ls        & 0.774 & 0.023 & 0.014 & 0.095 & 0.426 & 0.000 & 0.992 & 0.681 \\
BlockedAnt     & 0.632 & 0.151 & 0.149 & 0.281 & 0.480 & 0.000 & 0.890 & 0.528 \\
BlockedHopper  & 0.453 & 0.088 & 0.080 & 0.445 & 0.471 & 0.000 & 0.586 & 0.241 \\
HalfCheetah    & 0.512 & 0.009 & 0.003 & 0.064 & 0.431 & 0.000 & 0.394 & 0.360 \\
Walker\_ls     & 0.259 & 0.043 & 0.051 & 0.208 & 0.460 & 0.000 & 0.692 & 0.203 \\
\bottomrule
\end{tabular}}
\caption{Mask admission rates across the seven ablation conditions.}
\label{tab:ablation-rho}
\end{table}

\begin{table}[h]
\centering
\small
\resizebox{\linewidth}{!}{%
\begin{tabular}{lccccccc}
\toprule
\textbf{Environment}
& $A$
& $B$
& $A^{-\mathrm{act}}$
& $A^{-\mathrm{cf}}$
& \textbf{CF-only}
& \textbf{KKT-only}
& \textbf{CF-KKT} \\
\midrule
Ant\_ls        & 0.768 & 0.746 & 0.804 & 0.570 & \textbf{0.907} & 0.458 & 0.831 \\
BlockedAnt     & 0.444 & 0.438 & 0.446 & 0.449 & \textbf{0.904} & 0.522 & 0.616 \\
BlockedHopper  & 0.614 & 0.617 & 0.574 & 0.510 & \textbf{0.802} & 0.481 & 0.602 \\
HalfCheetah    & 0.635 & 0.619 & 0.636 & 0.494 & 0.655 & 0.976 & \textbf{0.993} \\
Walker\_ls     & 0.779 & \textbf{0.787} & 0.783 & 0.524 & 0.624 & 0.382 & 0.705 \\
\bottomrule
\end{tabular}}
\caption{ROC AUC of the learned constraint score on held-out transitions.}
\label{tab:ablation-auc}
\end{table}

\subsection{Analysis}

Across the five environments, CF-only and full CF-KKT attain the two highest
mean AUC values, $0.778$ and $0.749$, respectively, whereas KKT-only attains
$0.564$. We therefore compare CF-only and CF-KKT downstream in
Table~\ref{tab:cfonly-vs-cfkkt}, while using the KKT-only condition to isolate
what can be identified from expert-side optimality supervision alone.

\begin{table}[t]
\centering
\small
\setlength{\tabcolsep}{5pt}
\begin{tabular}{l cc cc}
\toprule
& \multicolumn{2}{c}{\textbf{CF-only}}
& \multicolumn{2}{c}{\textbf{CF-KKT}} \\
\cmidrule(lr){2-3} \cmidrule(lr){4-5}
\textbf{Environment}
& \textbf{Viol.} $\downarrow$ & \textbf{Return} $\uparrow$
& \textbf{Viol.} $\downarrow$ & \textbf{Return} $\uparrow$ \\
\midrule
Ant\_ls       & $16.6 \pm 10.4$  & $246 \pm 176^{\dagger}$
              & $\mathbf{2.8 \pm 0.9}$  & $\mathbf{1130 \pm 91}$ \\
BlockedAnt    & $163.4 \pm 137.9$ & $276 \pm 182^{\dagger}$
              & $\mathbf{15.3 \pm 4.2}$ & $\mathbf{2449 \pm 237}$ \\
BlockedHopper & $2.1 \pm 6.1^{\dagger}$ & $112 \pm 298^{\dagger}$
              & $17.6 \pm 21.2$ & $\mathbf{454 \pm 252}$ \\
HalfCheetah   & $0.0 \pm 0.0^{\dagger}$ & $-3 \pm 3^{\dagger}$
              & $54.3 \pm 48.9$ & $\mathbf{2338 \pm 788}$ \\
Walker\_ls    & $0.0 \pm 0.0^{\dagger}$ & $-6 \pm 9^{\dagger}$
              & $74.3 \pm 56.3$ & $\mathbf{410 \pm 283}$ \\
\bottomrule
\end{tabular}
\caption{Downstream comparison of CF-only and CF-KKT. Mean $\pm$ standard
deviation over $N=10$ seeds; $^{\dagger}$ denotes severe return collapse.}
\label{tab:cfonly-vs-cfkkt}
\end{table}

\paragraph{Counterfactual supervision provides off-demonstration discrimination,
but does not calibrate the feasibility threshold by itself.}
CF-only achieves the highest mean AUC, indicating that counterfactuals provide
a strong signal for ranking feasible against infeasible states. However,
$\rho_\theta=0$ on all five environments, so the fixed feasibility threshold
rejects every transition. The comparison between $A$ and $A^{-\mathrm{cf}}$
shows the same effect: adding counterfactual supervision improves AUC on four
of five tasks, but also makes the learned mask substantially more restrictive, $\rho_{\theta}$ of A is close to 0 on 4/5 envs, meaning that it rejects almost every transition not from the expert.
Thus, counterfactual supervision provides useful off-demonstration separation,
but by itself does not determine where the feasibility boundary should lie.

\paragraph{KKT supervision provides complementary expert-side information, but
alone does not reliably identify the unsafe region.}
KKT-only exhibits nearly the opposite failure mode from CF-only. Its admission
rate exceeds $\rho^\star$ on four of five environments, indicating that
expert-side optimality supervision alone generally produces an overly
permissive mask. Its AUC is also near chance on
\texttt{Ant\_ls}, \texttt{BlockedAnt}, \texttt{BlockedHopper}, and
\texttt{Walker\_ls}, showing that supervision evaluated only on expert
transitions provides limited information about how unseen states should be
ranked. \texttt{HalfCheetah} is a notable exception, where KKT-only already
achieves AUC $0.976$, suggesting that the local optimality structure is
particularly informative in that environment.

\paragraph{The two supervision sources correct complementary failure modes.}
Combining KKT and counterfactual supervision substantially improves calibration:
the mean absolute deviation
$\frac{1}{5}\sum_e |\rho_{\theta,e}-\rho^\star_e|$
is $0.526$ for CF-only and $0.232$ for KKT-only, but decreases to $0.123$ for
full CF-KKT. The finer expert-side ablations support the same interpretation.
Removing the activity term, $A^{-\mathrm{act}}$, increases $\rho_\theta$ in
every environment while changing AUC only modestly, indicating that this term
primarily affects threshold placement. Conditions $A$ and $B$ are nearly
identical, suggesting that complementary slackness contributes little in the
absence of the stationarity residual. Full CF-KKT combines the
off-demonstration discrimination supplied by counterfactuals with the
expert-side structure supplied by the KKT-informed objective, yielding both
strong ranking performance and a substantially better-calibrated feasibility
mask.

\paragraph{Downstream control requires both discrimination and a useful
operating threshold.}
Table~\ref{tab:cfonly-vs-cfkkt} illustrates why AUC alone is insufficient.
Despite its high mean AUC, CF-only suffers severe return collapse on every
task; its low violation counts on \texttt{BlockedHopper},
\texttt{HalfCheetah}, and \texttt{Walker\_ls} therefore arise largely because
the learned mask prevents meaningful task progress. CF-KKT instead attains
substantially higher return on all five environments and improves both return
and violation count on \texttt{Ant\_ls} and \texttt{BlockedAnt}. Together,
these results support the intended division of roles: counterfactual
supervision identifies off-demonstration infeasibility, while KKT-informed
expert supervision helps place that discrimination at a practically useful
feasibility threshold.
\section{Proofs}
\label{app:proofs}

\subsection{Sensitivity Analysis for MIP-based Inference of Constraint Parameters}
\label{subsec: Sensitivity Analysis for MIP-based Inference of Constraint Parameters}


Here, we analyze the sensitivity of the MIP-based constraint inference method in Sec. \ref{subsec: MIP Formulation for Parametric Constraints} against dynamics misspecification under a set of mild regularity conditions.
Let $\boldeta_{1:D} := (\eta_t^\paren{d}: t \in [0,T], d \in [D_E]) \in \R^{TD_E}$, 
$\boldrho_{1:D} := (\rho_t^\paren{d}: t \in [0,T], d \in [D_E]) \in \R^{TD_E}$, and 
$\boldkappa_{1:D_E} := (\kappa_t^\paren{d}: t \in [0,T], d \in [D_E]) \in \R^{TD_E}$ denote dual variables for \eqref{Eqn: Inverse KKT, Relaxed} that respectively correspond to the constraints
$\hat c(x_t^\paren{d}, u_t^\paren{d}; \theta) \leq 0$, $\lambda_t^\paren{d} \geq 0$, and $\lambda_t^\paren{d} \hat c(x_t^\paren{d}, u_t^\paren{d}; \theta) = 0$, across $d \in [D_E]$ and $t \in [0,T]$. 
Then \eqref{Eqn: Inverse KKT, Relaxed} admits a Lagrangian map given by:
\begin{align}
\label{Eqn: Lagrangian, Inverse KKT, Relaxed}
    &\lagrangian_\invRel(\theta, \boldlambda_{1:D_E}, \boldnu_{1:D_E}, \boldeta_{1:D_E}, \boldrho_{1:D_E}, \boldkappa_{1:D_E}) \\ \nonumber
    := \ &\textstyle\sum_{d=1}^{D_E} \Big[ \big\Vert \hat \bolds(\boldx^\paren{d}, \boldu^\paren{d}, \boldlambda^\paren{d}, \boldnu^\paren{d}; \theta, \psi) \big\Vert_2^2 +  \sum_{t=0}^T \eta_t^\paren{d} \hat c(x_t^\paren{d}, u_t^\paren{d}; \theta) \\ \nonumber
    &\hspace{1.5cm} - \textstyle\sum_{t=0}^T \rho_t^\paren{d} \lambda_t^\paren{d} + \textstyle\sum_{t=0}^T \kappa_t^\paren{d} \lambda_t^\paren{d} \hat c(x_t^\paren{d}, u_t^\paren{d}; \theta) \Big],
\end{align}
and KKT conditions given by:
\begin{subequations}
\label{Eqn: KKT for Relaxed Inverse KKT}
\begin{align}
\label{Eqn: KKT for Relaxed Inverse KKT, constraint and dual of primal}
    \hat c(x_t^\paren{d}, u_t^\paren{d}; \theta) \leq 0 \quad &\perp \quad \lambda_t^\paren{d} \geq 0, \quad \forall t \in [T], \ \forall d \in [D_E], \\
\label{Eqn: KKT for Relaxed Inverse KKT, constraint and dual of dual}
    \hat c(x_t^\paren{d}, u_t^\paren{d}; \theta) \leq 0 \quad &\perp \quad \eta_t^\paren{d} \geq 0, \quad \forall t \in [T], \ \forall d \in [D_E], \\
\label{Eqn: KKT for Relaxed Inverse KKT, dual of primal and dual of dual}
    - \lambda_t^\paren{d} \leq 0 \quad &\perp \quad \rho_t^\paren{d} \geq 0, \quad \forall t \in [T], \ \forall d \in [D_E], \\
\label{Eqn: KKT for Relaxed Inverse KKT, Stationarity}
    \nabla_{(\theta, \boldlambda_{1:D}, \boldnu_{1:D})} \lagrangian_\invRel(\theta, &\boldlambda_{1:D}, \boldnu_{1:D}, \boldeta_{1:D}, \boldrho_{1:D}, \boldkappa_{1:D}) = \boldzero.
\end{align}
\end{subequations}

We say \eqref{Eqn: KKT for Relaxed Inverse KKT, constraint and dual of primal} holds with \textit{strict complementarity} if for each $t \in [0, T]$, one of the two inequalities $c(x_t, u_t; \theta) \leq 0$ and $\lambda_t \geq 0$ holds with strict inequality while the other holds with equality, and likewise for \eqref{Eqn: KKT for Relaxed Inverse KKT, constraint and dual of dual}-\eqref{Eqn: KKT for Relaxed Inverse KKT, dual of primal and dual of dual}. 
The following proposition uses the KKT system \eqref{Eqn: KKT for Relaxed Inverse KKT} and the Implicit Function Theorem to analyze the local sensitivity of the recovered constraint parameter $\theta$ to the neural network weights $\psi$ which characterize the learned dynamics map $\hat f(\cdot, \cdot; \psi)$. 
For any weights $\psi$, we will use $\boldz(\psi) := (\theta, \boldlambda_{1:D}, \boldnu_{1:D}, \boldeta_{1:D}, \boldrho_{1:D}, \boldkappa_{1:D})(\psi)$ to denote the concatenation of feasible primal and dual variables for \eqref{Eqn: Inverse KKT, Relaxed} at $\psi$.

\begin{proposition}
\label{Prop: Sensitivity Analysis for MIP-based Inference of Constraint Parameters}
Suppose 
$\hat f(\cdot, \cdot; \psi)$ is $C^4$ for each $\psi \in \R^{M_\psi}$.
Suppose also that, for a given NN weight $\hat \psi \in \R^{M_\psi}$, 
the relaxed inverse problem \eqref{Eqn: Inverse KKT, Relaxed} admits a primal-dual solution satisfying strict complementarity.
Define the map $F(\cdot)$ such that the equality $F(\boldz, \psi) = 0$, at $\psi = \hat \psi$, represents the KKT system \eqref{Eqn: KKT for Relaxed Inverse KKT} 
when all active inequalities in \eqref{Eqn: KKT for Relaxed Inverse KKT, constraint and dual of primal}-\eqref{Eqn: KKT for Relaxed Inverse KKT, dual of primal and dual of dual} are treated as equalities, and all inactive inequalities are ignored.
Suppose $F(\cdot)$ is $C^2$ in a neighborhood of $\hat\psi$, and $\nabla_\boldz F(\boldz^\star(\hat \psi), \hat\psi)$ is non-singular.
Then, there exists an open neighborhood $U$ of $\hat \psi$ and an open neighborhood $V$ of $\boldz^\star(\hat \psi)$ such that, for each $\psi \in U$, there exists a unique $\boldz^\star(\psi) \in V$ satisfying $F(\boldz^\star(\psi), \psi) = \emph{\boldzero}$. 
Moreover, for each $\psi \in U$:
\begin{align}
\label{Eqn: Sensitivity Analysis for MIP-based Inference of Constraint Parameters}
    \Vert \boldz^\star(\psi) - \boldz^\star(\hat\psi) \Vert_2 &\leq \Vert \nabla_\boldz F(\boldz^\star(\hat \psi), \hat\psi)^{-1} \nabla_\psi F(\boldz^\star(\hat \psi), \hat\psi) \Vert_2 \cdot \Vert \psi - \hat \psi \Vert_2  + \bigO(\Vert \psi - \hat \psi \Vert_2^2).
\end{align}
\end{proposition}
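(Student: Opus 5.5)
The plan is to apply the Implicit Function Theorem (IFT) to the reduced equality system $F(\boldz,\psi)=\boldzero$ and then Taylor-expand the resulting implicit map. Two preliminary checks are needed first.

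First, the hypotheses of the IFT hold. $F$ is built from the stationarity conditions \eqref{Eqn: KKT for Relaxed Inverse KKT, Stationarity} together with the active constraints among \eqref{Eqn: KKT for Relaxed Inverse KKT, constraint and dual of primal}--\eqref{Eqn: KKT for Relaxed Inverse KKT, dual of primal and dual of dual}, with the inactive ones dropped. The stationarity part involves first derivatives of $\lagrangian_\invRel$. That Lagrangian contains $\Vert\hat\bolds\Vert_2^2$, and $\hat\bolds$ already involves $\nabla_{(\boldx,\boldu)}\hat f$. So $F$ depends on second derivatives of $\hat f$, and $\nabla_\psi F$ on third. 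Hence $\hat f\in C^4$ makes $F$ at least $C^1$ in $(\boldz,\psi)$; we also take as given the assumed $C^2$ regularity of $F$ near $\hat\psi$.

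Second, the active set is locally stable. By strict complementarity at $\boldz^\star(\hat\psi)$, every inactive inequality holds strictly and every active multiplier is strictly positive. By continuity of $\boldz^\star(\psi)$, which comes from the IFT, these strict inequalities persist on a possibly smaller neighborhood. Thus the zero of $F$ remains a genuine KKT point of \eqref{Eqn: KKT for Relaxed Inverse KKT} with the same active set, and the reduction to $F$ is consistent for $\psi$ near $\hat\psi$.

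Next comes the IFT itself. Since $F(\boldz^\star(\hat\psi),\hat\psi)=\boldzero$ and $\nabla_\boldz F$ is nonsingular there, the IFT gives neighborhoods $U\ni\hat\psi$ and $V\ni\boldz^\star(\hat\psi)$, and a unique $C^1$ map $\psi\mapsto\boldz^\star(\psi)\in V$ with $F(\boldz^\star(\psi),\psi)=\boldzero$. Moreover $\boldz^\star$ is $C^2$, because $F$ is $C^2$. Differentiating $F(\boldz^\star(\psi),\psi)=\boldzero$ gives
\[
\nabla_\psi\boldz^\star(\hat\psi)=-\nabla_\boldz F(\boldz^\star(\hat\psi),\hat\psi)^{-1}\nabla_\psi F(\boldz^\star(\hat\psi),\hat\psi).
\]

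Finally, the estimate follows from Taylor's theorem with remainder. We have $\boldz^\star(\psi)-\boldz^\star(\hat\psi)=\nabla_\psi\boldz^\star(\hat\psi)(\psi-\hat\psi)+R(\psi)$. Because $\boldz^\star$ is $C^2$, the second derivative is bounded on a compact sub-neighborhood, which gives $\Vert R(\psi)\Vert_2=\bigO(\Vert\psi-\hat\psi\Vert_2^2)$. Taking norms, applying the triangle inequality, and using the operator-norm bound then yields \eqref{Eqn: Sensitivity Analysis for MIP-based Inference of Constraint Parameters}; shrinking $U$ if needed makes the $\bigO$ constant uniform.

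The main obstacle I expect is the second preliminary check: justifying that the solution of the reduced smooth system $F=\boldzero$ still solves the original complementarity system, so that the active set does not switch as $\psi$ moves. My approach is to invoke strict complementarity together with continuity of the IFT map, shrinking $U$ until no inactive constraint can become active and no positive multiplier can reach zero.
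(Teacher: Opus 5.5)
Your proposal is correct and matches the paper's proof essentially step for step. Both arguments apply the Implicit Function Theorem to the reduced equality system $F(\boldz,\psi)=\boldzero$. Both use strict complementarity and continuity of $\boldz^\star$ to keep the strict inequalities strict, so that zeros of $F$ remain KKT points. Both then obtain the bound from a Lagrange-remainder Taylor expansion combined with the triangle inequality and an operator-norm estimate.

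One harmless aside: your derivative count is too high. The stationarity part of $F$ involves only first $(\boldx,\boldu)$-derivatives of $\hat f$. Also, $C^4$ regularity of $\hat f(\cdot,\cdot;\psi)$ for each fixed $\psi$ gives no regularity in $\psi$. This does not affect the argument, since you, like the paper, rely directly on the assumed $C^2$ regularity of $F$.
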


\begin{proof}
\vspace{-3pt}

By the Implicit Function Theorem \cite[Thm. 9.28]{Rudin1963PrinciplesOfMathematicalAnalysis}, there exist neighborhoods $U$ of $\hat{\psi}$ and $V$ of $\boldz^\star(\hat{\psi})$ such that, for each $\psi \in U$, $\boldz = \boldz^\star(\psi)$ is the unique solution in $V$ to $F(\boldz, \psi) = 0$, and $\boldz^\star(\cdot)$ is $C^2$ over $U$, with $\nabla_\psi \boldz^\star(\psi) = - \nabla_\boldz F(\boldz^\star(\psi), \psi)^{-1} \nabla_\psi F(\boldz^\star(\psi), \psi)$ for each $\psi \in U$.
By continuity and strict complementarity, $U$ and $V$ can be chosen sufficiently small such that the strict inequalities at $(\boldz^\star(\hat{\psi}),\hat{\psi})$ remain strict, so $F(\boldz^\star(\psi),\psi)=0$ continues to represent the KKT system \eqref{Eqn: KKT for Relaxed Inverse KKT} for each $\psi \in U$.
The bound \eqref{Eqn: Sensitivity Analysis for MIP-based Inference of Constraint Parameters} then follows by applying the Lagrange remainder theorem, followed by the triangle and Cauchy-Schwarz inequalities.
\end{proof}

\begin{remark}
Strict complementarity does not require all inequality constraints to be active, but merely excludes the degenerate case in which the primal and dual inequalities evaluated at an optimal primal-dual pair are simultaneously tight. 
The role of strict complementarity in Prop. \ref{Prop: Sensitivity Analysis for MIP-based Inference of Constraint Parameters} is to identify an unambiguous smooth equality representation of each complementarity condition at the nominal solution $\hat \psi$. 
Under strict complementarity, by continuity, the strict inequalities in the KKT system remain strict under sufficiently small perturbations of $\psi$, so the same equality representation remains valid, and the classical Implicit Function Theorem applies. 
Thus, Prop. \ref{Prop: Sensitivity Analysis for MIP-based Inference of Constraint Parameters} does not require the active set to remain fixed a priori or under arbitrary dynamics perturbations. 
More general sensitivity results that admit 
active-set transitions can be obtained by representing the KKT conditions as a generalized equation (e.g., formulated via an inclusion operator) under appropriate regularity conditions, although the resulting solution map need not be differentiable at such transitions.
\end{remark}

\vspace{-8pt}
Under the conditions in Prop. \ref{Prop: Sensitivity Analysis for MIP-based Inference of Constraint Parameters}, given any weights $\psi \in U$ for the learned dynamics map $\hat f(\cdot, \cdot; \psi)$, the optimal constraint parameter recovered by solving \eqref{Eqn: Inverse KKT, Relaxed} is given by the first $M_\theta$ components of $\boldz^\star(\psi)$.
Thus, the bound \eqref{Eqn: Sensitivity Analysis for MIP-based Inference of Constraint Parameters} quantifies the robustness of the recovered constraint parameter against fluctuations in the learned, dynamics-encoding weights $\psi$.

\end{document}